\newcommand{\gc}{\cellcolor[rgb]{0.91,0.91,0.91}}
\theoremstyle{plain}
\newtheorem{theorem}{Theorem}
\newtheorem{proposition}[theorem]{Proposition}
\theoremstyle{definition}
\theoremstyle{remark}
\title{Inversion-based Latent Bayesian Optimization}
\author{%
  Jaewon Chu\thanks{equal contributions}, \hspace{0.2cm}
  Jinyoung Park\footnotemark[1], \hspace{0.2cm}
  Seunghun Lee, \hspace{0.2cm}
  Hyunwoo J. Kim\thanks{Corresponding author}  \\
  Computer Science \& Engineering\\
  Korea University \\
  \texttt{\{allonsy07, lpmn678, llsshh319, hyunwoojkim\}@korea.ac.kr}\\
}
\begin{document}
\maketitle

\begin{abstract}
    Latent Bayesian optimization~(LBO) approaches have successfully adopted Bayesian optimization over a continuous latent space by employing an encoder-decoder architecture to address the challenge of optimization in a high dimensional or discrete input space.
    LBO learns a surrogate model to approximate the black-box objective function in the latent space.
    However, we observed that most LBO methods suffer from the `misalignment problem', which is induced by the reconstruction error of the encoder-decoder architecture.
    It hinders learning an accurate surrogate model and generating high-quality solutions.
    In addition, several trust region-based LBO methods select the anchor, the center of the trust region, based solely on the objective function value without considering the trust region's potential to enhance the optimization process.
    To address these issues, we propose \textbf{Inv}ersion-based Latent \textbf{B}ayesian \textbf{O}ptimization~(InvBO), a plug-and-play module for LBO.
    InvBO consists of two components: an inversion method and a potential-aware trust region anchor selection.
    The inversion method searches the latent code that completely reconstructs the given target data.
    The potential-aware trust region anchor selection considers the potential capability of the trust region for better local optimization.
    Experimental results demonstrate the effectiveness of InvBO on nine real-world benchmarks, such as molecule design and arithmetic expression fitting tasks.
    Code is available at \href{https://github.com/mlvlab/InvBO}{https://github.com/mlvlab/InvBO}.
\end{abstract}
\section{Introduction}
Bayesian optimization~(BO) has been used in a wide range of applications such as material science~\cite{ament2021autonomous}, chemical design~\cite{wang2022bayesian, griffiths2020constrained},
,and hyperparameter optimization~\cite{wang2018combination, wu2019hyperparameter}.
The main idea of BO is probabilistically estimating the expensive black-box objective function using a surrogate model to find the optimal solution with minimum objective function evaluation.
While BO has shown its success on continuous domains, applying BO over discrete input space is challenging~\cite{deshwal2021combining, oh2019combinatorial}.
To address it, Latent Bayesian Optimization~(LBO) has been proposed~\cite{kusner2017grammar,gomez2018automatic,eissman2018bayesian,tripp2020sample,grosnit2021high,maus2022local,lee2023advancing}.
LBO performs BO over a latent space by mapping the discrete input space into the continual latent space with generative models such as Variational Auto Encoders~(VAE)~\cite{kingma2013auto}, consisting of an encoder $q_\phi$ and a decoder $p_\theta$.
Unlike the standard BO, the surrogate model in LBO associates a latent vector $\mathbf{z}$ with an objective function value by emulating the composition of the objective function and the decoder of VAE.

In LBO, however, the reconstruction error of the VAE often leads to one latent vector $\mathbf{z}$ being associated with two different objective function values as explained in Figure~\ref{fig:Misalignment}.
We observe that the discrepancy between $y$ and $y'$ (or $\mathbf{x}$ and $\mathbf{x}'$) hinders learning an accurate surrogate model $g$ and generating high-quality solutions. 
We name this the `misalignment problem'.
Most prior works~\cite{tripp2020sample, grosnit2021high, gomez2018automatic} use the surrogate model $g^{\text{enc}}$, which is trained with the encoder triplet $(\mathbf{x},\mathbf{z},y)$, and generate solutions $\mathbf{x}'$ via decoder $p_\theta$.
Since $g^{\text{enc}}$ fails to estimate the composite function of $p_\theta$ and $f$, this approach often results in suboptimal outcomes.
Some works~\cite{maus2022local,lee2023advancing} handle the misalignment problem by employing the surrogate model $g^{\text{dec}}$ trained with the decoder triplet $(\mathbf{x}',\mathbf{z}, y')$.
However, they request a huge amount of additional oracle calls to obtain the decoder triplet, which leads to inefficient optimization.
In addition, several existing LBO methods~\cite{maus2022local, lee2023advancing, maus2023discovering} adopt the trust region method to restrict the search space and have shown performance gain.
Most prior works select the anchor, the center of the trust region, as the current optimal point.
This objective function value-based anchor selection overlooks the potential to benefit the optimization performance of the latent vectors within the trust region.
\begin{figure}[t]
    \centering
    \includegraphics[width=0.8\textwidth]{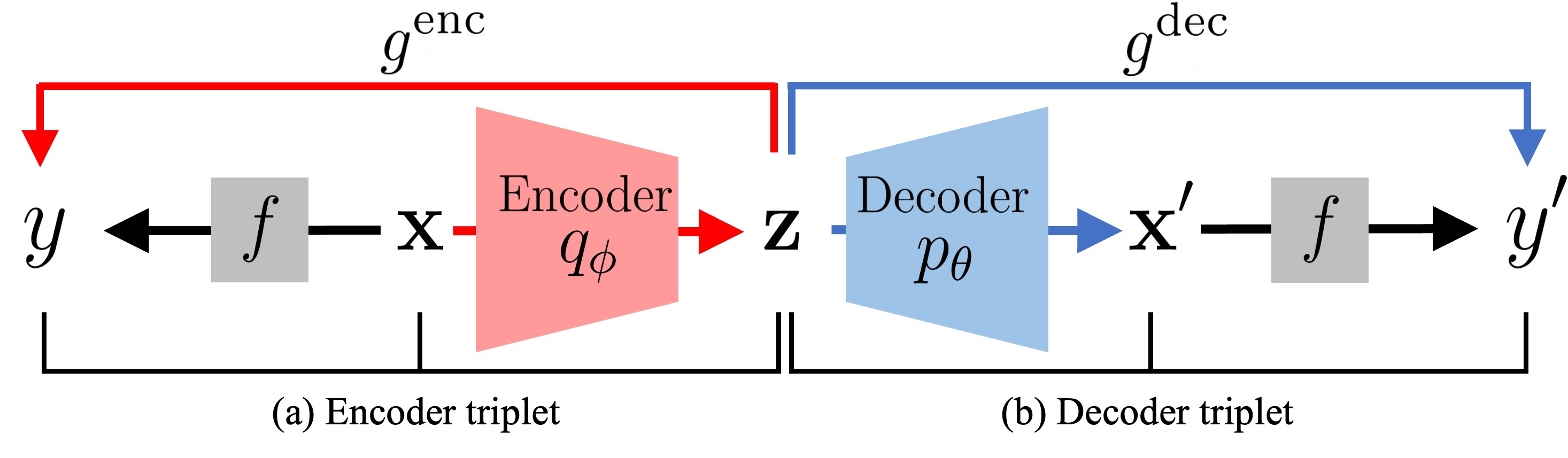}
    \caption{\textbf{Misalignment problem.} 
    In LBO, a latent vector $\mathbf{z}$ can be associated with two function values $y$ and $y'$ due to the reconstruction error of the VAE, \textit{i.e.}, $\mathbf{x} \ne \mathbf{x}'$.
    (a) In Encoder triplet $(\mathbf{x},\mathbf{z},y)$, latent vector $\mathbf{z}$ is associated with $f(\mathbf{x})$, where $\mathbf{x}$ is the original input to the encoder, \textit{i.e.}, $\mathbf{z} =q_\phi(\mathbf{x})$.
    (b) In Decoder triplet $(\mathbf{x}',\mathbf{z}, y')$, $\mathbf{z}$ is associated with $y'=f(\mathbf{x}')$, which is the objective function value of reconstructed input value $\mathbf{x}'$ using the decoder, \textit{i.e.}, $\mathbf{x}' =p_{\theta}(\mathbf{z})$.
    The discrepancy between $y$ and $y'$ hinders learning the accurate surrogate model $g$. We name this the `misalignment problem'.
    } 
    \label{fig:Misalignment}
    \vskip -0.15in
\end{figure}

In this work, we propose an \textbf{Inv}ersion-based Latent \textbf{B}ayesian \textbf{O}ptimization (\textbf{InvBO}), a plug-and-play module for VAE-based LBO methods.
InvBO consists of two components: the inversion method and a potential-aware trust region anchor selection.
The inversion method addresses the misalignment problem by inverting decoder $p_{\theta}$ to find the latent code that yields $\mathbf{x}$ without any additional oracle call.
We theoretically analyze that our inversion method decreases the upper bound of the error between the surrogate model and the objective function within the trust region.
The potential-aware trust region anchor selection method selects the anchor considering not only the observed objective function value but also the potential to enhance the optimization process of the latent vectors that the trust region contains.
We provide the experimental evaluation on nine different tasks, Guacamol, DRD3, and arithmetic expression fitting task to show the general effectiveness of InvBO.
Specifically, plug-and-play results of InvBO over diverse prior LBO works show a large performance gain and achieved state-of-the-art performance.

The contributions of our paper are as follows:
\begin{itemize}
    \item We propose the inversion method to address the misalignment problem in LBO by generating the decoder triplet without using any additional oracle calls.
    \item We propose the potential-aware trust region anchor selection, aiming to select the centers of trust regions considering the latent vectors expected to benefit the optimization process within the trust regions.
    \item By combining the inversion method and potential-aware trust region anchor selection, we propose Inversion-based Latent Bayesian Optimization (InvBO), a novel plug-and-play module for LBO, and achieve state-of-the-art performance on the nine different tasks.
\end{itemize}
\section{Related Works}
\subsection{Latent Bayesian Optimization}

The goal of Latent Bayesian Optimization~(LBO)~\cite{kusner2017grammar,gomez2018automatic,tripp2020sample,grosnit2021high,maus2022local,lee2023advancing, chen2024pg, notin2021improving, lu2018structured, maus2023discovering} is to learn a latent space to enable optimization over a continuous space from discrete or structured input (\textit{e.g.,} graph or image).
LBO consists of a Variational AutoEncoder (VAE) to generate data from the latent representation and a surrogate model (\textit{e.g.}, Gaussian process) to map the latent representation into the objective score.
Some works on the LBO have designed new decoder architectures~\cite{kusner2017grammar,jin2018junction,dai2018syntax,kajino2019molecular,samanta2020nevae} to perform the reconstruction better, while other works have proposed learning mechanisms~\cite{gomez2018automatic,eissman2018bayesian,tripp2020sample,grosnit2021high,maus2022local,lee2023advancing,chen2024pg} to alleviate the discrepancy between the latent space and input space.
LOL-BO~\cite{maus2022local} adapts the concept of trust-region to the latent space and jointly learns a VAE and a surrogate model to search the data point in the local region.
CoBO~\cite{lee2023advancing} designs new loss to encourage the correlation between the distances in the latent space and objective function.
\subsection{Inversion in Generative Models}

Inversion has widely been applied to a variety of generative models such as Generative Adversarial Networks~(GANs)~\cite{goodfellow2014generative,zhu2016generative} and Diffusion models~\cite{sohl2015deep,ho2020denoising,choi2021ilvr,gal2022textual}.
Inversion is the process of finding the latent code $\mathbf{z}_{\text{inv}}$ of a given image to manipulate images with generative models.
Formally, given an image $\mathbf{x}$ and the well-trained generator $G$, the inversion can be written as:
\begin{equation}
\label{eq:inversion}
    \mathbf{z}_{\text{inv}} = \underset{\mathbf{z} \in \mathcal{Z}}{\arg\min}\ d_{\mathcal{X}}(G(\mathbf{z} ), \mathbf{x} ),
\end{equation}
where $d_{\mathcal{X}}(\cdot, \cdot )$ denotes the distance metric in the image space $\mathcal{X}$, and $\mathcal{Z}$ is the latent space. 
To solve Eq.~\eqref{eq:inversion}, most inversion-based works can be generally classified as two approaches: optimization-based and learning-based methods.
The optimization-based inversion~\cite{creswell2018inverting,abdal2019image2stylegan,abdal2020image2stylegan++,huh2020transforming} iteratively finds a latent vector to reconstruct the target image $\mathbf{x}$ through the fixed generator.
The learning-based inversion~\cite{zhu2016generative,perarnau2016invertible,bau2019inverting} trains the encoder for mapping the image $\mathbf{x}$ to the latent code $\mathbf{z}$ while fixing the decoder.
In this work, we introduce the concept of inversion to find the latent vector that can generate a desired sample for constructing an aligned triplet and we use the optimization-based inversion.
\section{Preliminaries} 
\noindent\textbf{Bayesian optimization.}
Bayesian optimization (BO) is a powerful and sample-efficient optimization algorithm that aims at searching the input $\mathbf{x}$ with a maximum objective value $f(\mathbf{x})$, which is formulated as:
\begin{equation}
\label{eq:BO}
    \mathbf{x}^* = \underset{\mathbf{x} \in \mathcal{X}}{\arg\max}\, f(\mathbf{x}),
\end{equation}
where the black-box objective function $f:\mathcal{X} \mapsto \mathcal{Y}$ is assumed expensive to evaluate, and $\mathcal{X}$ is a feasible set.
Since the objective function $f$ is unknown or cost-expensive, BO methods probabilistically emulate the objective function by a surrogate model $g$ with observed dataset $\mathcal{D} = \{(\mathbf{x}^i, y^i)|y^i = f(\mathbf{x}^i)\}^n_{i=1}$.
With the surrogate model $g$, the acquisition function $\alpha$ selects the most promising point $\mathbf{x}^{n+1}$ as the next evaluation point while balancing exploration and exploitation.
BO repeats this process until the oracle budget is exhausted.

\noindent\textbf{Trust region-based local Bayesian optimization.} 
Classical Bayesian optimization methods often suffer from the difficulty of the optimization in a high dimensional space~\cite{eriksson2019scalable}.
To address this problem, TuRBO~\cite{eriksson2019scalable} adopts trust regions to limit the search space to small regions.
The anchor (center) of trust region $\mathcal{T}$ is selected as a current optimal point, and the size of the trust region is scheduled during the optimization process.
At the beginning of the optimization, the side length of all trust regions is set to $L_{\text{init}}$.
When the trust region $\mathcal{T}$ updates the best score $\tau_{\text{succ}}$ times in a row, the side length becomes twice until it reaches $L_{\text{max}}$.
Similarly, when it fails to update the best score $\tau_{\text{fail}}$ times in a row, the side length becomes half.
When $L$ falls below a $L_{\text{min}}$, the side length of the trust region is set to $L_{\text{init}}$ and restart the scheduling.
Recently, LOL-BO~\cite{maus2022local} adapted trust region-based local optimization to LBO, and has shown performance gain.
\section{Method} 
In this section, we present an Inversion-based Latent Bayesian Optimization~(InvBO) consisting of an inversion and a novel trust region anchor selection method for effective and efficient optimization.
We first describe latent Bayesian optimization and the misalignment problem of it~(Section~\ref{sec:Latent Bayesian optimization}).
Then, we introduce the inversion method to address the misalignment problem without using any additional oracle budgets~(Section~\ref{sec:inversion}).
Lastly, we present a potential-aware trust region anchor selection for better local search space~(Section~\ref{sec:TR_anchor_selection}).
\subsection{Misalignment in Latent Bayesian Optimization}
\label{sec:Latent Bayesian optimization}
BO has proven its effectiveness in various areas where input space $\mathcal{X}$ is continuous, however, BO over the discrete domain, such as chemical design, is a challenging problem.
To handle this problem, VAE-based latent Bayesian optimization (LBO) has been proposed~\cite{kusner2017grammar,tripp2020sample,maus2022local,lee2023advancing} that leverages BO over a continuous space by mapping the discrete input space $\mathcal{X}$ to a continuous latent space $\mathcal{Z}$.
Variational autoencoder (VAE) is composed of encoder $q_\phi: \mathcal{X} \mapsto \mathcal{Z}$ to compute the latent representation $\mathbf{z}$ of the input data $\mathbf{x}$ and decoder $p_\theta: \mathcal{Z} \mapsto \mathcal{X}$ to generate the data $\mathbf{x}$ from the latent $\mathbf{z}$.

Given the objective function $f$, latent Bayesian optimization can be formulated as:
\begin{equation}
\label{eq:LBO}
    \mathbf{z}^* = \underset{\mathbf{z} \in \mathcal{Z}}{\arg\max}\,  f(p_\theta(\mathbf{z})),
\end{equation}
where $p_\theta(\mathbf{z})$ is a generated data with the decoder $p_{\theta}$ and $\mathcal{Z}$ is a latent space.
Unlike the standard BO, the surrogate model $g$ aims to emulate the function $f\circ p_\theta: \mathcal{Z}\mapsto \mathcal{Y}$.
To the end, the surrogate model is trained with aligned dataset $\mathcal{D}=\{(\mathbf{x}^i, \mathbf{z}^i, y^i )\}_{i=1}^n$, where $\mathbf{x}^i=p_{\theta}(\mathbf{z}^i )$ is generated by the decoder $p_{\theta}:\mathcal{Z}\mapsto \mathcal{X}$ and $y^i=f(\mathbf{x}^i )$ is the objective value of $\mathbf{x}^i$ evaluated via the black box objective function $f:\mathcal{X} \mapsto \mathcal{Y}$.
In the rest of our paper, we define that the dataset is aligned when all triplets satisfy the above conditions (\textit{i.e.,} all triplets are the decoder triplets explained in Figure~\ref{fig:Misalignment}), and the dataset is misaligned otherwise. 
We define the `misalignment problem' as the misaligned dataset hinders the accurate learning of the surrogate model $g$.
\begin{figure}[t]
    \centering
    \vspace{-10pt}
    \includegraphics[width=0.85\textwidth]{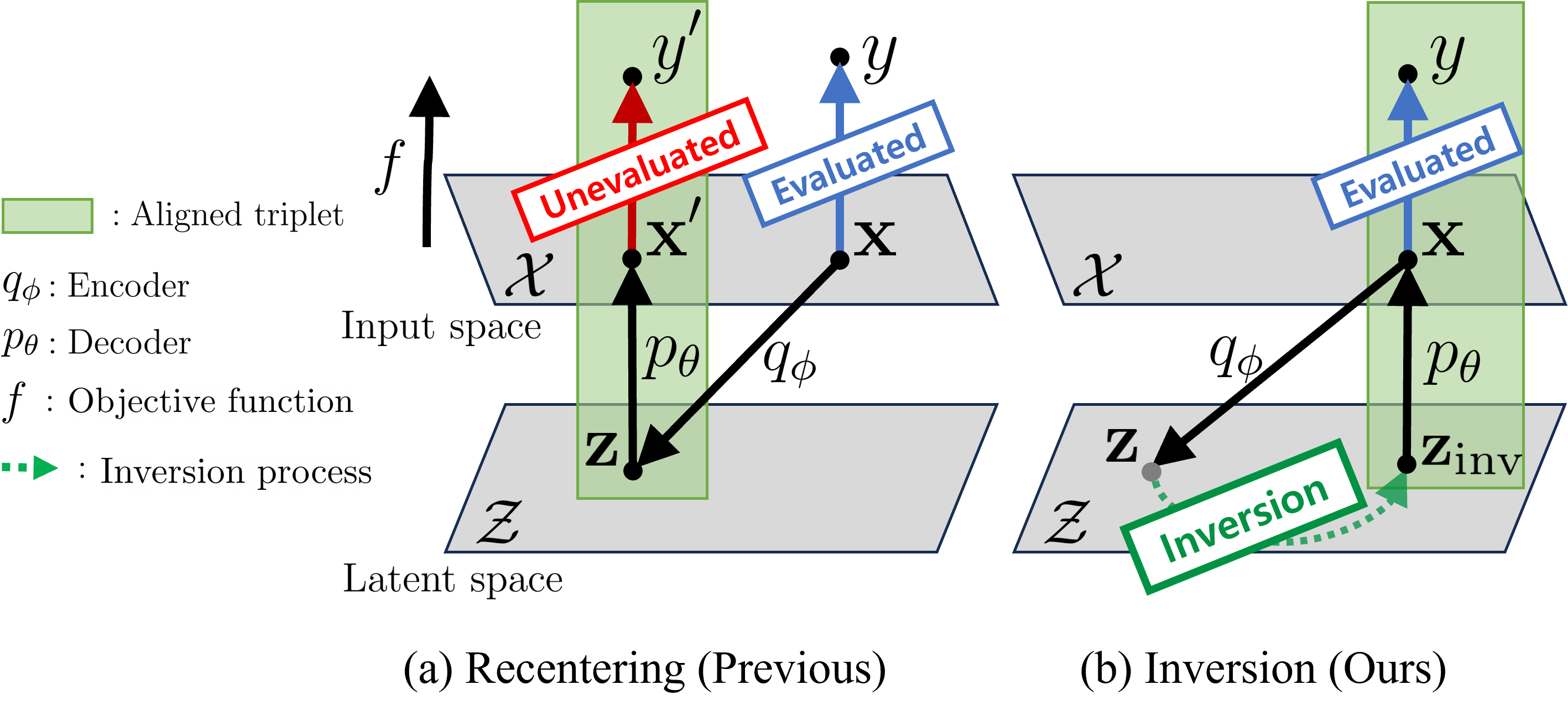}
    \vskip -0.05in
    \caption{\textbf{Comparison of solutions to the misalignment problem.} 
    (a) Some works~\cite{maus2022local,lee2023advancing} solve the misalignment problem by the recentering technique that generates the aligned triplet $\left(\mathbf{x}', \mathbf{z}, y' \right)$. However, it requests additional oracle calls as $y' = f(\mathbf{x}')$ is unevaluated, and does not fully use the evaluated function value $y = f(\mathbf{x})$.
    (b) The inversion method~(ours) aims to find $\mathbf{z}_{\text{inv}}$ that generates the evaluated data $\mathbf{x}$ to get the aligned triplet $\left(\mathbf{x}, \mathbf{z}_{\text{inv}}, y\right)$ without any additional oracle calls.
    } 
    \label{fig:main_fig}
    \vskip -0.2in
\end{figure}

\begin{wrapfigure}{rt}{0.48\textwidth} 
    \centering
    \vspace{-14pt}
    \includegraphics[width=0.48\textwidth]{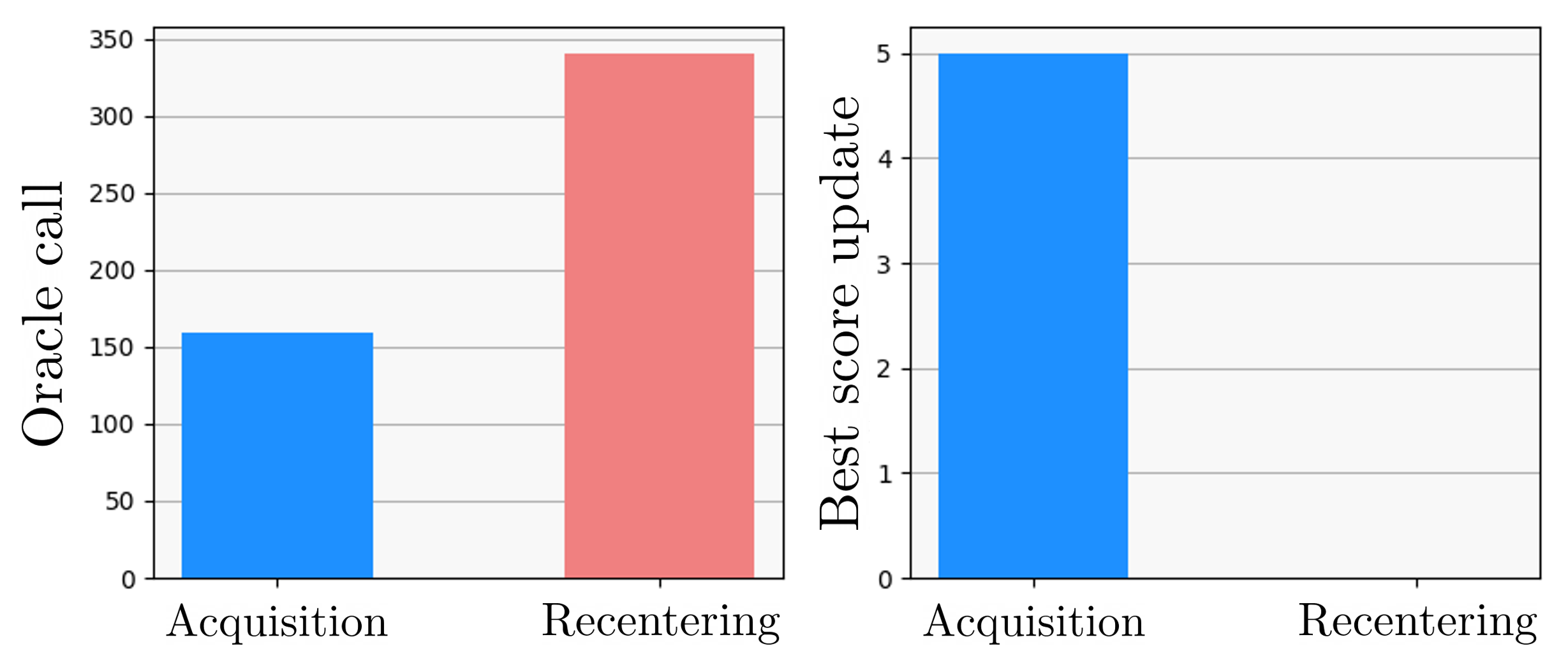}
    \caption{(Left) The number of oracle calls to evaluate the queries selected by the acquisition function (blue) and during the recentering (Red). (Right) The number of objective function evaluation that updates the best score.}
    \label{fig:observation}
    \vskip -0.20in
\end{wrapfigure}

Most existing LBO works~\cite{tripp2020sample,grosnit2021high,maus2022local,lee2023advancing} overlook the misalignment problem~\ref{fig:Misalignment}, which originates from two processes: (i) construction of initial dataset $\mathcal{D}^0$ and (ii) update of VAE.

\textbf{Construction of initial dataset $\mathcal{D}^0$.} Since initial dataset $\mathcal{D}^0$ is composed of pairs of input data and its corresponding objective value $\{(\mathbf{x}^i, y^i )|y^i=f(\mathbf{x}^i ) \}_{i=1}^n$, LBO requires latent vectors $\{\mathbf{z}^i \}_{i=1}^n$ to train the surrogate model.
Most works compute a latent vector $\mathbf{z}^i$ as $\mathbf{z}^i = q_{\phi}(\mathbf{x}^i )$ under the assumption that VAE completely reconstructs every data points (\textit{i.e.}, $p_\theta(q_\phi(\mathbf{x}^i)) = \mathbf{x}^i$), which is difficult to be satisfied in every case.
This results in the data misalignment~($\mathbf{x}^i \ne p_{\theta}(\mathbf{z}^i )$) during the construction of initial dataset $\mathcal{D}^0$.

\textbf{Update of VAE.} In LBO works, updating VAE during the optimization plays a crucial role in adapting well to newly generated samples.
However, due to the update of VAE, the previously generated triplet $(\mathbf{x}, \mathbf{z}, y )$ cannot ensure the alignment since $\mathbf{z}$ was computed by the VAE before the update.
Previous LBO works~\cite{maus2022local, lee2023advancing} solve the misalignment problem originating from the VAE update with a recentering technique that requests additional oracle calls to generate the aligned dataset $\mathcal{D} = \{(\mathbf{x'}^i, q_\phi(\mathbf{x}^i), f(\mathbf{x'}^i))\}^n_{i=1}$ where $\mathbf{x'}^i = p_\theta(q_\phi(\mathbf{x}^i))$ as shown in Figure~\ref{fig:main_fig}~(Left). 
But, it has a limitation to consuming a huge amount of additional oracle calls while they do not update the best score~(Figure~\ref{fig:observation}). 
Note that prior works do not explicitly mention the additional oracle calls during the recentering technique, but it can be verified by the official GitHub code. 
Further details about oracle consumption of the recentering are provided in the supplement Section~\ref{supp_sec:Recentering}.
\subsection{Inversion-based Latent Bayesian Optimization}
\label{sec:inversion}
Our primary goal is training the surrogate model $g$ to correctly emulate the composite function $f\circ p_\theta: \mathcal{Z}\mapsto \mathcal{Y}$ via constructing an aligned dataset without consuming additional oracle calls.
To the end, we propose an inversion method that inverts the target discrete data $\mathbf{x}$ into the latent vector $\mathbf{z}$ that satisfies $\mathbf{x} = p_\theta(\mathbf{z})$ for dataset alignment as shown in Figure~\ref{fig:main_fig}~(Right).
With a pre-trained frozen decoder $p_\theta$, the latent vector $\mathbf{z}$ can be optimized by:
\begin{equation}
\label{eq:inversion_optimization}
\mathbf{z}_{\text{inv}} = \underset{\mathbf{z} \in \mathcal{Z}}{\arg\min} \ d_{\mathcal{X}} (\mathbf{x}, p_\theta(\mathbf{z})),
\end{equation}
where $\mathbf{x}$ is a target data and $d_{\mathcal{X}}$ is a distance function in the input space $\mathcal{X}$.
We use the normalized Levenshtein distance~\cite{levenshtein1966binary} as our distance function, \( d_{\mathcal{X}} \), which can be applied to any string-form data. 
Our inversion method, however, is flexible and can utilize any task-specific distance functions, such as Tanimoto similarity~\cite{bajusz2015tanimoto} for molecule design tasks.
To solve the Eq.~\eqref{eq:inversion_optimization}, we iteratively update a latent vector $\mathbf{z}$ to find $\mathbf{z}_{\text{inv}}$ that reconstructs the target data $\mathbf{x}$.
We provide the overall pseudocode of the inversion method in Algorithm \ref{algo:inversion}.

\begin{wrapfigure}{r}{0.5\linewidth} 
\begin{minipage}{\linewidth}
\vskip -0.3in
\begin{algorithm}[H]
    \caption{Inversion}
    \label{algo:inversion}
    \textbf{Input:} Encoder $q_{\phi}$, decoder $p_{\theta}$, target data $\mathbf{x}$, max iteration $T$, distance function $d_\mathcal{X}$, learning rate $\eta$, reconstruction loss $\mathcal{L}$
\begin{algorithmic}[1]
    \State Initialize $\mathbf{z}^{(0)} \leftarrow q_\phi(\mathbf{x})$
    \For{$t = 0,1,...,T-1$}
    \State $\mathbf{z}^{(t+1)} \leftarrow \mathbf{z}^{(t)} - \eta \nabla_{\mathbf{z}^{(t)}}
    \mathcal{L} \left( p_\theta(\mathbf{z}^{(t)}), \mathbf{x} \right)$
    \If{$d_{\mathcal{X}}(\mathbf{x}, p_\theta(\mathbf{z}^{(t+1)})) < \epsilon$} \hfill \emph{$\rhd$ Eq. \eqref{eq:inversion_optimization}}
        \State \Return $\mathbf{z}^{(t+1)}$   
    \EndIf
    \EndFor
    \State \Return $\mathbf{z}^{(T)}$
\end{algorithmic}
\end{algorithm}
\end{minipage}
\vskip -0.2in
\end{wrapfigure}

The initialization strategy of latent vector $\mathbf{z}$ plays a key role in the optimization-based inversion process.
We set the initialization point of latent vector $\mathbf{z}$ as an output of a pre-trained encoder $q_\phi(\mathbf{x})$ given target discrete data $\mathbf{x}$ as in line 1.
We iteratively update the latent vector $\mathbf{z}$ with the cross-entropy loss used in VAE training in line 3 until it reaches the maximum number of iterations $T$.
Before the iteration budget is exhausted, we finish the inversion process when the distance between the generated data $p_{\theta}( \mathbf{z})$ and target data $\mathbf{x}$ is less than $\epsilon$ as our goal is finding the latent vector $\mathbf{z}_{\text{inv}}$ that satisfies Eq.~\eqref{eq:inversion_optimization}, which is denoted in line 4.
The inversion method generates the aligned dataset during the construction of the initial dataset $\mathcal{D}^0$ and the update of VAE to handle the misalignment problem.

We theoretically show that optimizing the latent vector $\mathbf{z}$ to satisfy $d_{\mathcal{X}}(\mathbf{x}, p_\theta(\mathbf{z}))\approx 0$ with inversion plays a crucial role in minimizing the upper bound of the error between the posterior mean of the surrogate model and the objective function value within the trust region centered at $\mathbf{z}$.
\begin{proposition}
\label{theorem:TR_upperbound}
Let $f$ be a black-box objective function and $m$ be a posterior mean of Gaussian process, $p_\theta$ be a decoder of the variational autoencoder, $c$ be an arbitrarily small constant, $d_{\mathcal{X}}$ and $d_\mathcal{Z}$ be the distance function on input $\mathcal{X}$ and latent $\mathcal{Z}$ spaces, respectively.
The distance function $d_{\mathcal{X}}$ is bounded between 0 and 1, inclusive.
We assume that $f$, $m$ and the composite function of $f$ and $p_\theta$ are $L_1, L_2$, and $L_3$-Lipschitz continuous functions, respectively.
Suppose the following assumptions are satisfied:
\begin{equation}
\begin{split}
    \lvert f(\mathbf{x}) - m(\mathbf{z})\rvert \le c, \\
    d_{\mathcal{X}} (\mathbf{x}, p_\theta(\mathbf{z}) ) \le \gamma.
\end{split}
\end{equation}
Then the difference between the posterior mean of the arbitrary point $\mathbf{z}'$ in the trust region centered at $\mathbf{z}$ with trust radius $\delta$ and the black box objective value is upper bounded as:
\begin{equation}
\label{eq:theorem_upperbound}
|f(p_\theta(\mathbf{z}')) - m(\mathbf{z'})| \le c +  \gamma \cdot L_1 + \delta\cdot (L_2 + L_3),
\end{equation}
where $d_{\mathcal{Z}}(\mathbf{z}, \mathbf{z'})\le \delta$.
\end{proposition}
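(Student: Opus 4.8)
The plan is to derive the bound from a single application of the triangle inequality, inserting well-chosen intermediate points so that each resulting term is controlled either by one of the two stated hypotheses or by one of the three Lipschitz assumptions. Concretely, I would write
\begin{align*}
|f(p_\theta(\mathbf{z}')) - m(\mathbf{z}')|
&\le |f(p_\theta(\mathbf{z}')) - f(p_\theta(\mathbf{z}))| \\
&\quad + |f(p_\theta(\mathbf{z})) - f(\mathbf{x})| + |f(\mathbf{x}) - m(\mathbf{z})| + |m(\mathbf{z}) - m(\mathbf{z}')|,
\end{align*}
which peels the target quantity into a term that lives purely in latent space along the decoded curve, a term measuring the decoder reconstruction gap at the anchor $\mathbf{z}$, the given surrogate-fit term at the anchor, and a term that lives purely in latent space for the surrogate mean $m$.

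Next I would bound the four terms in order. For the first, the $L_3$-Lipschitz continuity of $f\circ p_\theta$ gives $|f(p_\theta(\mathbf{z}')) - f(p_\theta(\mathbf{z}))| \le L_3\, d_{\mathcal{Z}}(\mathbf{z}',\mathbf{z}) \le L_3\delta$, using that $\mathbf{z}'$ lies in the trust region of radius $\delta$ around $\mathbf{z}$. For the second, the $L_1$-Lipschitz continuity of $f$ together with the hypothesis $d_{\mathcal{X}}(\mathbf{x}, p_\theta(\mathbf{z})) \le \gamma$ yields $|f(p_\theta(\mathbf{z})) - f(\mathbf{x})| \le L_1\, d_{\mathcal{X}}(p_\theta(\mathbf{z}),\mathbf{x}) \le L_1\gamma$. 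The third term is exactly the assumption $|f(\mathbf{x}) - m(\mathbf{z})| \le c$. For the fourth, the $L_2$-Lipschitz continuity of $m$ gives $|m(\mathbf{z}) - m(\mathbf{z}')| \le L_2\, d_{\mathcal{Z}}(\mathbf{z},\mathbf{z}') \le L_2\delta$. Summing the four estimates produces $c + \gamma L_1 + \delta(L_2 + L_3)$, which is precisely \eqref{eq:theorem_upperbound}.

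Since the argument is essentially the triangle inequality plus three Lipschitz estimates, there is no genuine technical obstacle; the only thing requiring care is the choice of intermediate points, i.e.\ routing the bound through both $p_\theta(\mathbf{z})$ and $\mathbf{x}$ so that the decoder reconstruction error enters exactly once (scaled by $L_1$) while the latent displacement enters exactly twice (scaled by $L_2$ and $L_3$). I would also note that boundedness of $d_{\mathcal{X}}$ in $[0,1]$ is not needed for the inequality itself but keeps the bound meaningful by guaranteeing $\gamma\le 1$, and that the ``arbitrarily small constant'' $c$ is just the surrogate-fit tolerance at the anchor: driving $\gamma \to 0$ through the inversion step (Eq.~\eqref{eq:inversion_optimization}) collapses the second term and reduces the bound to $c + \delta(L_2+L_3)$, which is the sense in which inversion tightens the in-region error bound discussed before the statement.
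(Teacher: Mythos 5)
Your proof is correct and follows essentially the same route as the paper's: the same four-term triangle-inequality decomposition through the intermediate points $f(p_\theta(\mathbf{z}))$, $f(\mathbf{x})$, and $m(\mathbf{z})$, with each term bounded by the corresponding hypothesis or Lipschitz constant. Your closing remarks on the roles of $\gamma$, $c$, and the $[0,1]$ bound on $d_{\mathcal{X}}$ are accurate commentary but not needed for the inequality, exactly as you note.
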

The proof is available in Section~\ref{supp_sec:theorem}.
We assume that the black box function $f$, the posterior mean of Gaussian process $m$, and the objective function $f\circ p_\theta$ are Lipschitz continuous functions, which is a common assumption in Bayesian  optimization~\cite{gonzalez2016batch, hoang2018decentralized} or global optimization~\cite{floudas2008encyclopedia}.
Proposition~\ref{theorem:TR_upperbound} implies that the upper bound of the Gaussian process prediction error within the trust region can be minimized by reducing the distance between $\mathbf{x}$ and $p_\theta(\mathbf{z})$, denoted as $\gamma$.
Our inversion method reduces $\gamma$ by generating an aligned dataset without additional oracle calls.
In Eq.~\eqref{eq:theorem_upperbound}, the constant $c$ represents the accuracy of the surrogate model, which can be improved during training.
Some LBO works~\cite{grosnit2021high, maus2022local} introduce regularization terms to learn a smooth latent space, implicitly reducing $L_3$, the Lipschitz constant of the composite function $f \circ p_\theta$.
CoBO~\cite{lee2023advancing} further proposes regularization losses that explicitly reduce $L_3$ with theoretical grounding.
Since the surrogate model $m$ emulates the composite function $f \circ p_\theta$, its Lipschitz constant $L_2$ can also be reduced along with $L_3$.
\subsection{Potential-aware Trust Region Anchor Selection}
\label{sec:TR_anchor_selection}
Here, we propose a potential-aware trust region anchor selection to consider both the objective function value $y$ and the potential ability of the trust region to benefit the optimization.
Previous trust region-based BO works~\cite{eriksson2019scalable, lee2023advancing, maus2022local, daulton2022multi} select the anchor based on the corresponding objective function value only.
However, this objective score-based anchor selection does not consider that the trust region contains latent points expected to enhance the optimization performance.

We design a potential score to measure the potential ability to enhance the optimization process of the trust region.
To compute it, we use the acquisition function value, which is generally employed to measure the potential ability to improve the optimization process of a given data point.
Specifically, we employ Thompson Sampling, a well-established acquisition function used in previous trust region-based methods~\cite{eriksson2019scalable, maus2022local, lee2023advancing}.
Formally, the potential score of each trust region $\mathcal{T}_i$ is computed as follows:
\begin{equation}
\label{eq:acquisition in TR}
\alpha^i_{\text{pot}} = \underset{\mathbf{z}\in Z^i_\text{cand}}{\max} \hat{f}(\mathbf{z}) \text{ where } \hat{f} \sim \mathcal{GP}\left(
\mu(\mathbf{z}), k(\mathbf{z}, \mathbf{z}')
\right),
\end{equation}
where $i$ is an index of the candidate anchor point, $Z^i_{\text{cand}}$ is the candidate set sampled from the trust region $\mathcal{T}_i$ and $\hat f$ is a sampled function from the surrogate model (\textit{e.g.}, GP) posterior.

As the scale of objective function values $Y = \{y^1, y^2,...,y^n\}$ and that of the potential ability of each trust region $A = \left\{\alpha^1_{\text{pot}},\alpha^2_{\text{pot}},...,\alpha^n_{\text{pot}} \right\}$ is changed dynamically during the optimization process, \textcolor{black}{we calculate a scaled potential score $\alpha^{i}_{\text{pot}}$ by adjusting the scale of $\alpha^i_{\text{max}}$ according to the $Y$}:
\begin{equation}
\label{eq:Scaling}
\alpha^i_{\text{scaled}} = \frac{\alpha^i_{\text{pot}} - A_{\text{min}}}{A_{\text{max}} - A_{\text{min}}} \times (Y_{\text{max}} - Y_{\text{min}}),
\end{equation}
where $A_{\text{max}}=\max_{i}\left[\alpha^{i}_{\text{pot}}\right]$ and $A_{\text{min}}=\min_{i}\left[\alpha^{i}_{\text{pot}} \right]$ denote the maximum and minimum value of $A$, respectively. $Y_{\text{max}}=\max_{i}\left[y^{i}\right]$ and $Y_{\text{min}}=\min_{i}\left[y^{i} \right]$ indicate the maximum and minimum value of $Y$, respectively.
Based on the scaled potential score $\alpha^{i}_{\text{scaled}}$, the final score $s^i$ of each anchor is calculated as:
\begin{equation}
\label{eq:final_anchor_score}
s^i = y^i + \alpha^i_{\text{scaled}}.
\end{equation}
Our final score takes into account the objective function value of the anchor (observed value) and the potential score (model's prediction) for the better local search space.
Finally, we select the anchors with the highest final score $s^i$.
We summarize our potential-aware trust region anchor selection schema in Algorithm~\ref{algo:Anchor_Selection} of Section~\ref{supp_sec:pseudocode of PAS}.
\section{Experiments}
\subsection{Tasks}
We measure the performance of the proposed method named InvBO on nine different tasks with three Bayesian optimization benchmarks: Guacamol~\cite{brown2019guacamol}, DRD3, and arithmetic expression fitting tasks~\cite{kusner2017grammar,tripp2020sample,ahn2020guiding,grosnit2021high,deshwal2021combining,maus2022local}.
Guacamol and the DRD3 benchmark tasks aim to find molecules with the most necessary properties.
For Guacamol benchmarks, we use seven challenging tasks, Median molecules 2~(med2), Zaleplon MPO~(zale), Perindopril MPO~(pdop), Amlodipine MPO (adip), Osimertinib MPO~(osmb), Ranolazine MPO~(rano), and Valsartan SMARTS~(valt).
To show the effectiveness of our InvBO in various settings, we also conducted experiments in a large budget setting, which is used in previous works~\cite{maus2022local, lee2023advancing}.
The goal of the arithmetic expression fitting task is to generate single-variable expressions that minimize the distance from a target expression~(\textit{e.g.}, $1/3+x+\sin\left(x \times x \right)$).
More details of each benchmark are provided in Section~\ref{supp_sec:benchmark details}.
\subsection{Baselines}
We compare to six latent Bayesian Optimization methods: LS-BO,  TuRBO~\cite{eriksson2019scalable}, W-LBO~\cite{tripp2020sample}, LOL-BO~\cite{maus2022local}, CoBO~\cite{lee2023advancing}, and PG-LBO~\cite{chen2024pg}.
We also compare with GB-GA~\cite{jensen2019graph}, a widely used genetic algorithm for graph structure data.
LOL-BO and CoBO employ the decoder triplet generated by the recentering technique, and the other baselines employ the encoder triplet during the optimization process.
In the case of LOL-BO and CoBO, we substitute the recentering technique with the inversion method.
We provide the details of each baseline in Section~\ref{supp_sec:baseline details}.

\subsection{Implementation Details}
\label{subsec:Implementation details}
For the arithmetic expression fitting task, we follow other works ~\cite{kusner2017grammar,tripp2020sample,maus2022local,lee2023advancing} to employ GrammarVAE model~\cite{kusner2017grammar}.
For the \textit{de novo} molecule design tasks such as Guacamol benchmark and DRD3 tasks, we use SELFIES VAE~\cite{maus2022local} following recent works~\cite{maus2022local,lee2023advancing}.
In all tasks, we adopt sparse variational GP~\cite{snelson2005sparse} with deep kernel~\cite{wilson2016deep} as our surrogate model.
In the inversion method, the learning rate is set to 0.1 in all experiments.
The further implementation details are provided in Section~\ref{supp_sec:Implementation_details}.
\subsection{Experimental Results}
\begin{table*}[t]
    \centering
    \caption{Optimization results of applying InvBO to several trust region-based LBOs on Guacamol benchmark tasks. A higher score is a better one.}
    \begin{adjustbox}{width=\textwidth}
    \begin{tabular}{c||ccc||ccc}
    \toprule
     Task &\multicolumn{3}{c||}{Median molecules 2 (med2)} & \multicolumn{3}{c}{Valsartan SMARTS (valt)}\\
\midrule
     Num Oracle
        & 100 & 300 & 500 
        & 100 & 300 & 500 
          \\
\midrule
\midrule
    TuRBO-$L$         &  0.186$\pm$0.000 & 0.186$\pm$0.000 & 0.186$\pm$0.000    &  0.000$\pm$0.000 & 0.000$\pm$0.000 & 0.000$\pm$0.000 \\ 
    TuRBO-$L$ + InvBO &  0.186$\pm$0.000 & \textbf{0.194$\pm$0.002} & \textbf{0.202$\pm$0.001}  &  0.000$\pm$0.000 & \textbf{0.024$\pm$0.017} & \textbf{0.212$\pm$0.092} \\
\midrule
    LOL-BO          &  0.186$\pm$0.000 & 0.186$\pm$0.000 & 0.190$\pm$0.001    &  0.000$\pm$0.000 & 0.000$\pm$0.000 & 0.000$\pm$0.000 \\ 
    LOL-BO + InvBO  &  \textbf{0.189$\pm$0.002} & \textbf{0.204$\pm$0.005} & \textbf{0.227$\pm$0.010}  &  0.000$\pm$0.000 & \textbf{0.007$\pm$0.005} & \textbf{0.171$\pm$0.039} \\
\midrule
    CoBO            &  0.186$\pm$0.000 & 0.188$\pm$0.002 & 0.191$\pm$0.003    &  0.000$\pm$0.000 & 0.000$\pm$0.000 & 0.000$\pm$0.000 \\ 
    CoBO + InvBO    &  \textbf{0.187$\pm$0.001} & \textbf{0.203$\pm$0.004} & \textbf{0.214$\pm$0.006}  &  0.000$\pm$0.000 & \textbf{0.042$\pm$0.013} & \textbf{0.348$\pm$0.107} \\
      \bottomrule
    \end{tabular}
\end{adjustbox}

\label{table:plugin}
\end{table*}
\begin{figure*}[t!]
\centering
\vskip -0.1in
\includegraphics[width=\textwidth]{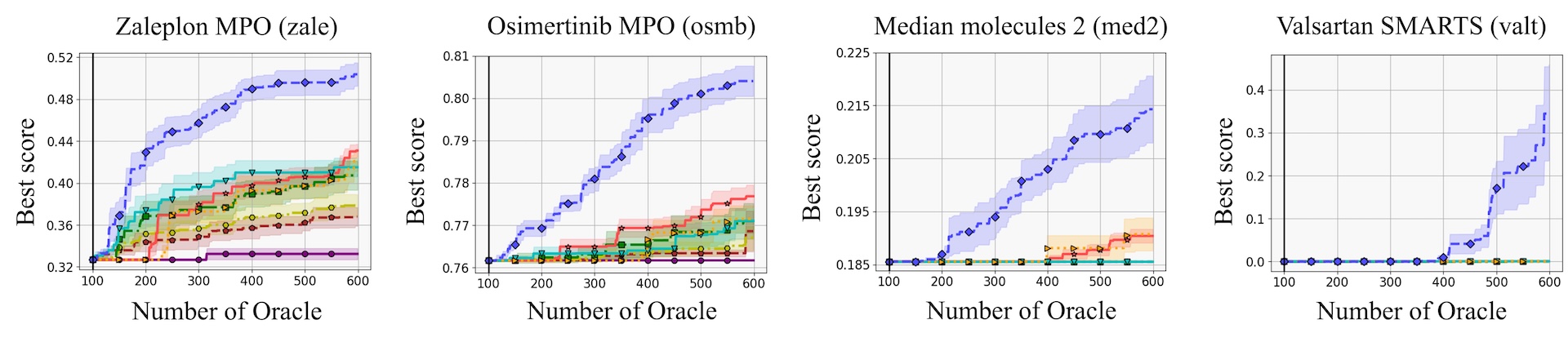}
\includegraphics[width=0.8\textwidth]{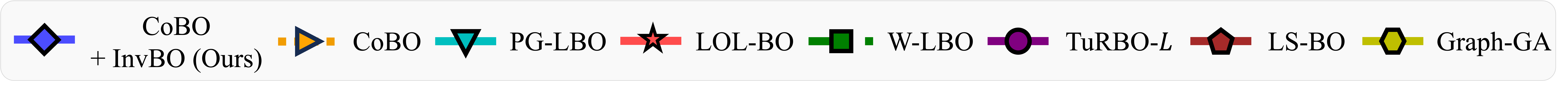}
    \caption{Optimization results on Guacamol benchmark tasks. The lines and ranges indicate the average and standard error of ten runs under the same settings. A higher score is a better score.}
    \label{fig:guacamol}
    \vskip -0.15in
\end{figure*}


    
\begin{figure*}[t]
    \centering
    \subfigure[DRD3 and Arithmetic expression task]{%
        \centering
        \includegraphics[width=0.48\textwidth]{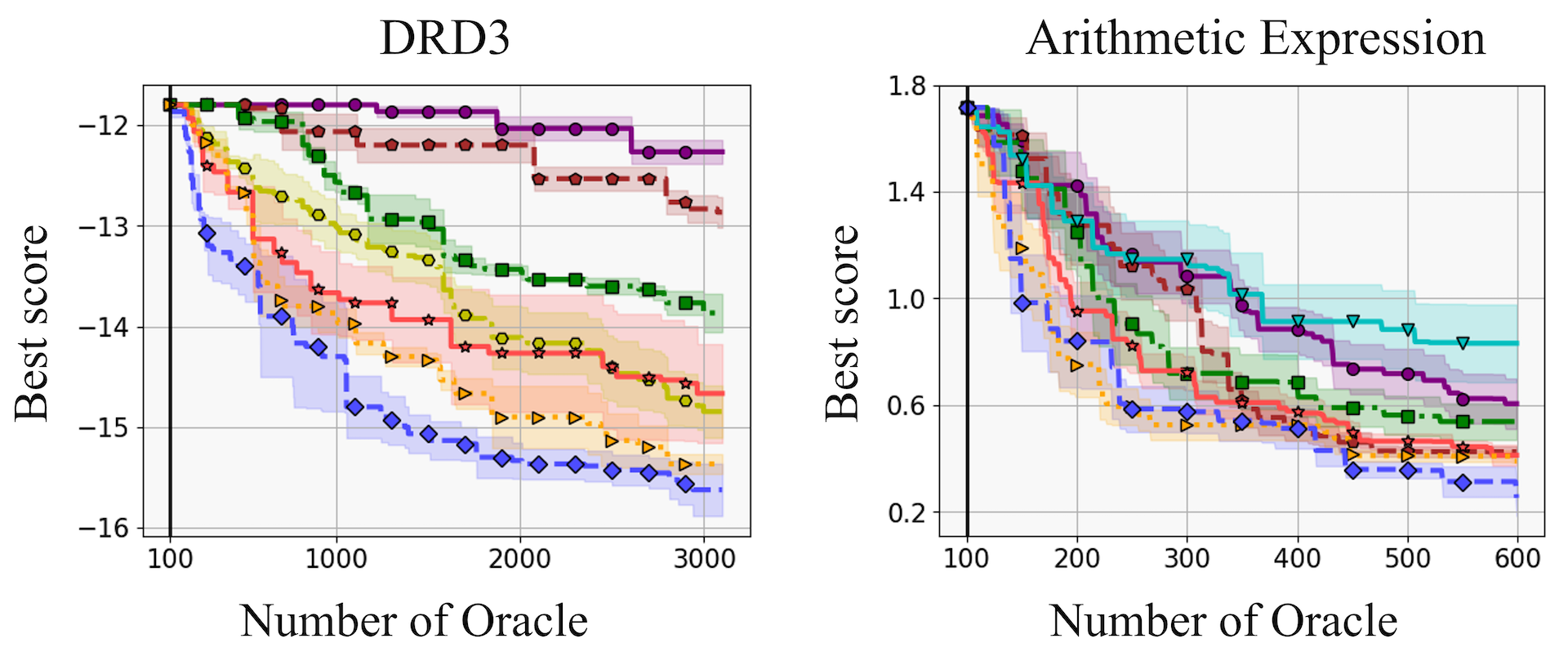}
        \label{subfig:otherexp}
    }
    \hspace{0.005\textwidth}
    \subfigure[Guacamol task under the large budget setting]{%
        \centering
        \includegraphics[width=0.48\textwidth]{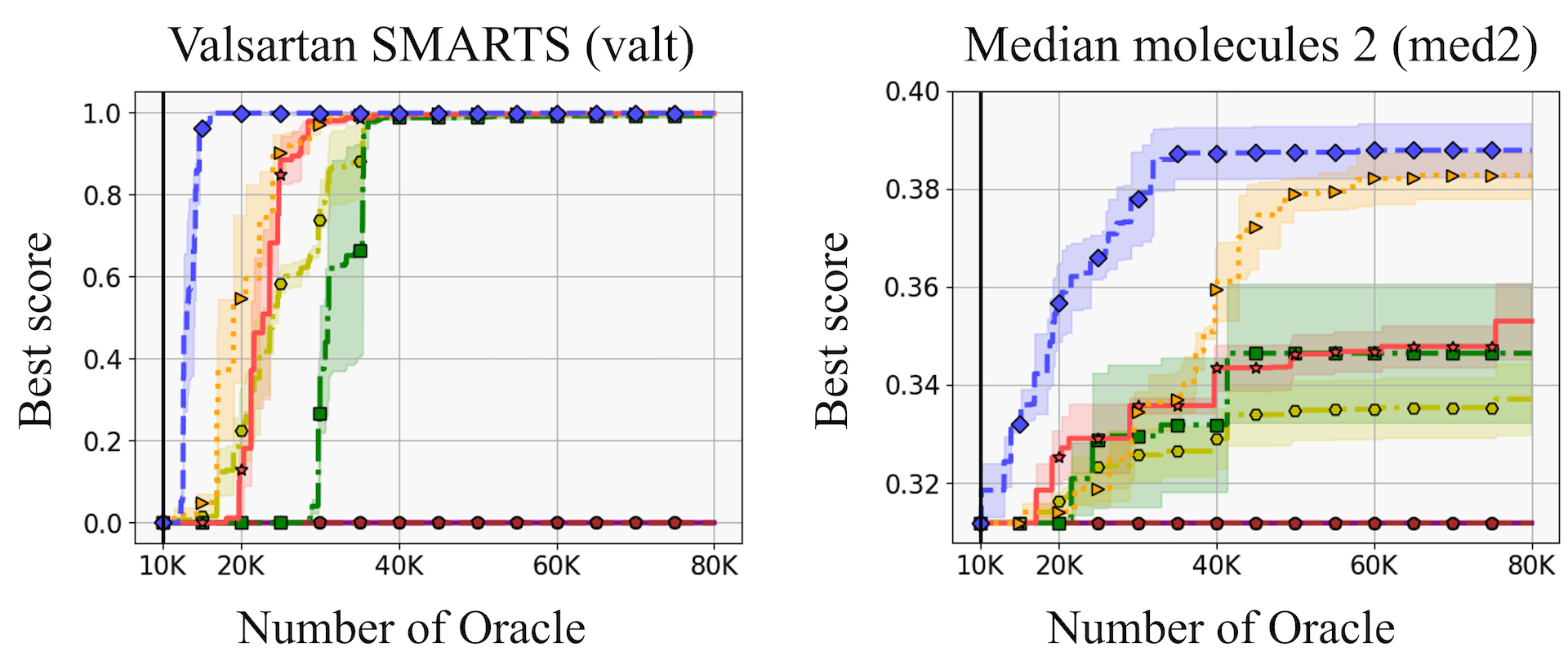} 
        \label{subfig:large_budget}
    }
    \includegraphics[width=0.8\textwidth]{Figures/Full_label.png}
    \caption{Optimization results on various tasks and settings. Note that: (a) A lower score is a better score. (b) A higher score is a better score.}
    \label{fig:subexp}
\end{figure*}
We apply our InvBO to several trust region-based LBOs, TuRBO-$L$, LOL-BO~\cite{maus2022local} and CoBO~\cite{lee2023advancing}, and provide optimization results on two Guacamol benchmark tasks, med2 and valt.
All results are average scores of ten runs under the identical settings.
Table~\ref{table:plugin} demonstrates that our InvBO consistently improves all LBO models in two tasks by a large margin. 
In particular, in the valt task, all baseline models with InvBO demonstrate significant performance improvements and CoBO with InvBO achieves a 0.348 score gain while the baseline models without InvBO fail in optimization.
More results of other baselines with InvBO on other tasks are provided in Section~\ref{supp_sec:plugin}.

Figure~\ref{fig:guacamol} provides the optimization results on four Guacamol benchmark tasks including med2, zale, osmb, and valt.
Each subfigure shows the number of evaluations of the black box objective function (Number of Oracle) and the corresponding average and standard error of the objective score (Best Score).
Our InvBO built on CoBO achieves the best performance in all four tasks.
Further results of other tasks are provided in Section~\ref{supp_sec:Guacamol_small}.

We also conduct experiments to demonstrate the effectiveness of our InvBO on DRD3 and arithmetic expression fitting tasks and large-budget settings.
The experimental results are illustrated in Figure~\ref{subfig:otherexp} and Figure~\ref{subfig:large_budget}, respectively.
Please note that both DRD3 and arithmetic expression tasks aim to minimize the score while the goal of Guacamol tasks is increasing the score.
Figure~\ref{fig:subexp} shows that our InvBO applied to CoBO achieves the best performance. 
We provide further optimization results on other tasks under the large budget settings in Section~\ref{supp_sec:Guacamol_large}.
These results demonstrate that InvBO is effective in diverse tasks and settings.
\section{Analysis}
\subsection{Ablation Study}
\begin{figure*}[t]
    \centering
    \vskip -0.1in
    \includegraphics[width=\textwidth]{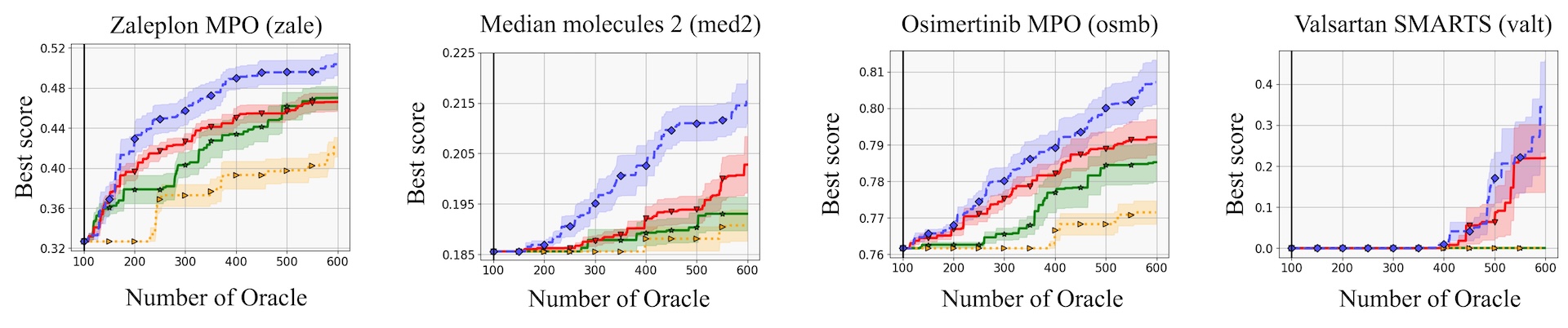}
    \includegraphics[width=0.8\textwidth]{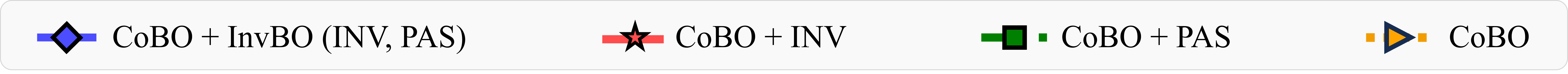}
    \caption{Optimization results of component ablation on {zale}, {med2}, {osmb}, and {valt}. 
    }
    \label{fig:abl_inv}
    \vskip -0.2in
\end{figure*}
We conduct additional experiments to verify the contribution of each component in our InvBO: the inversion method~(INV), and the potential-aware trust region anchor selection method~(PAS).
Figure~\ref{fig:abl_inv} shows the optimization results of the ablation study on med2, zale, osmb, and valt tasks.
From the figure, models with the inversion method (\textit{i.e.,} CoBO with INV and InvBO) replace the recentering technique as the inversion method, while models without the inversion method (\textit{i.e.,} vanilla CoBO and CoBO with PAS) employ the recentering technique.
Notably, both components of our method contribute to the optimization process, and the combination work, InvBO, consistently obtains better objective scores compared to other models.
Specifically, in osmb task, the average best score achieved by the methods with the PAS, INV and both (\textit{i.e.,} InvBO) shows 0.784, 0.792, and 0.804 score gains compared to vanilla CoBO, respectively.

\subsection{Analysis on Misalignment Problem and Inversion} 
\begin{figure*}[t]
    \centering
    \begin{minipage}[t]{0.49\textwidth}
        \centering
        \raisebox{-2mm}{\includegraphics[width=\textwidth]{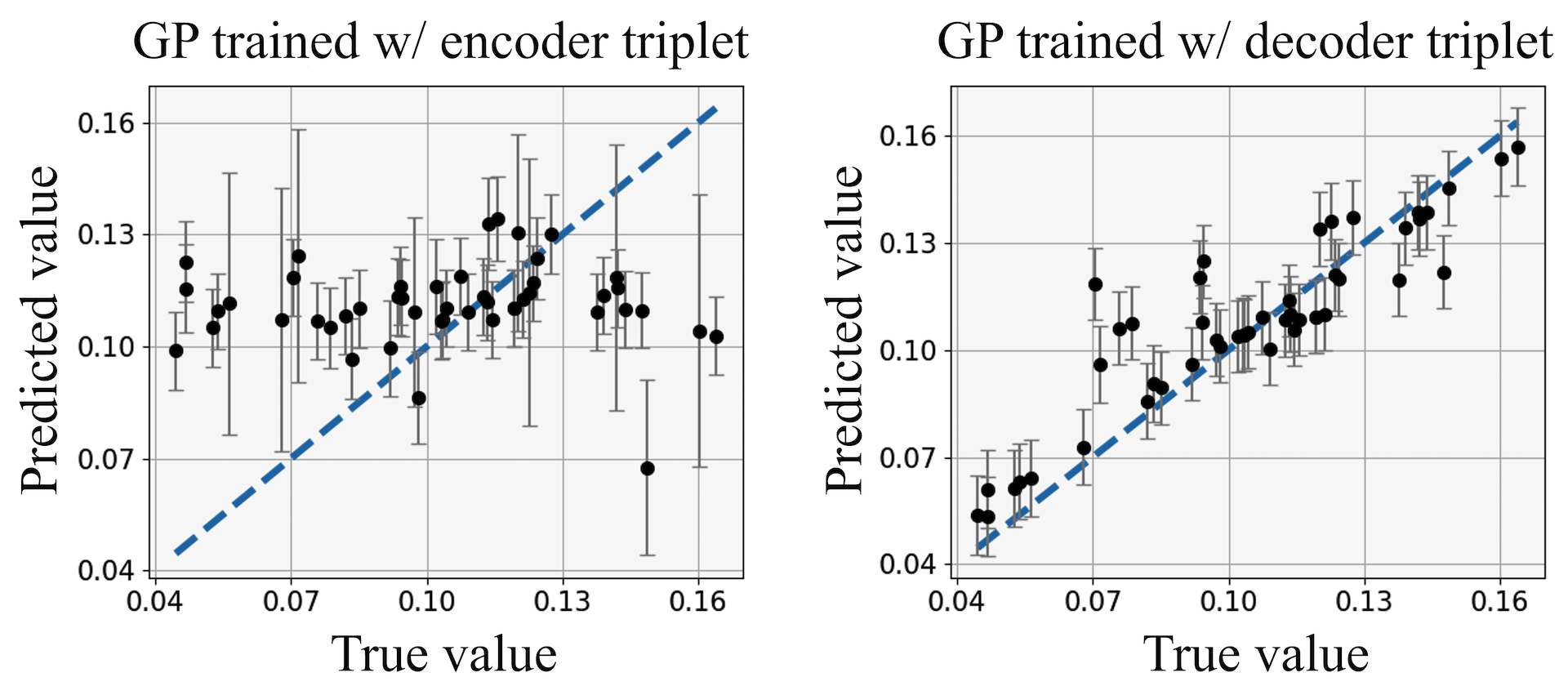}}
        \vspace{0.7mm}
        \caption{Gaussian process fitting results trained with encoder triplets and decoder triplets.}
        \label{fig:HDGP}
    \end{minipage}
    \hspace{0.005\textwidth}
    \begin{minipage}[t]{0.49\textwidth}
        \centering
        \includegraphics[width=\textwidth]{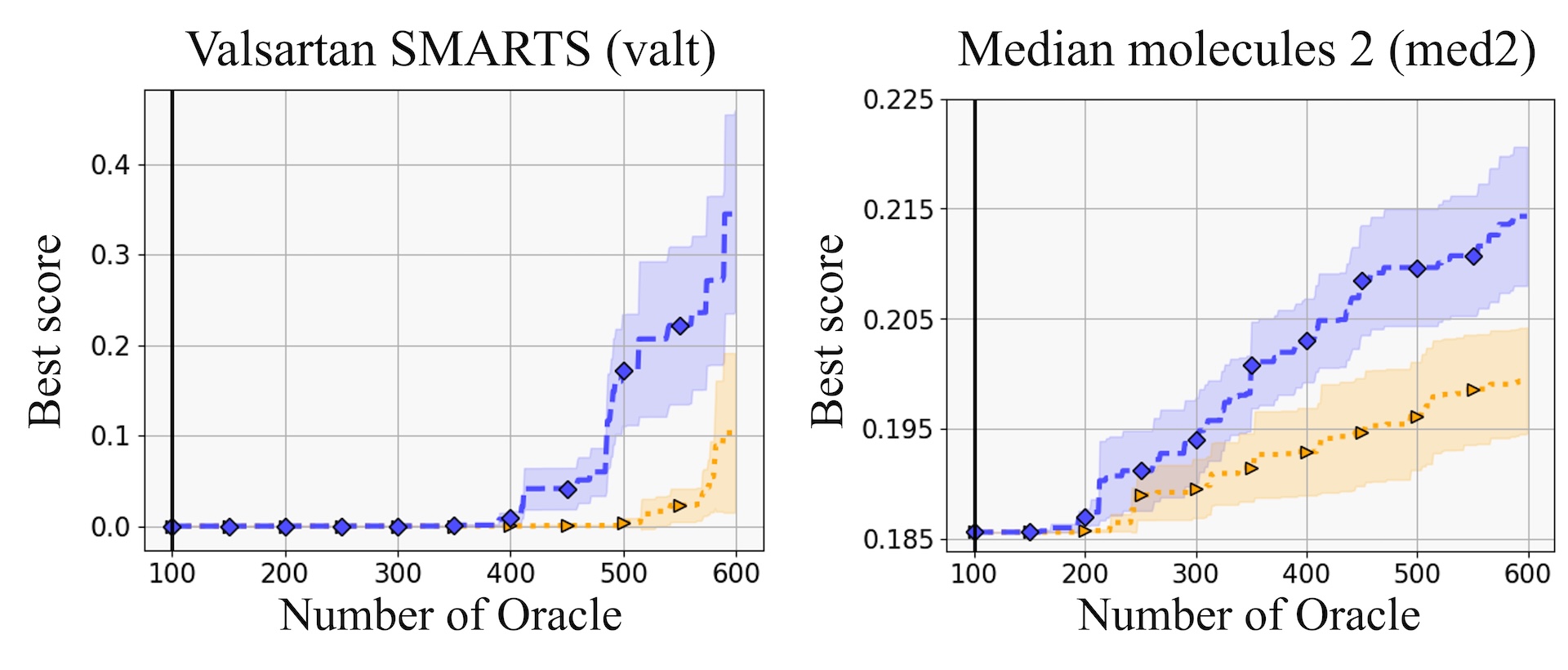}
        \includegraphics[width=\textwidth]{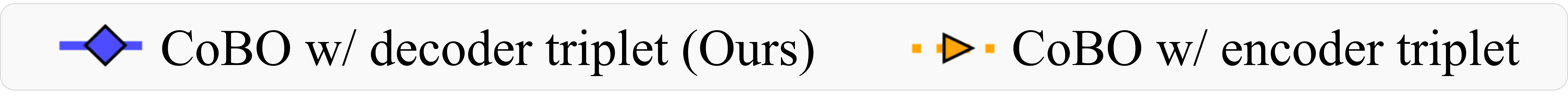} 
        \caption{Optimization results using encoder triplet and decoder triplet on valt and med2 tasks.}
        \label{fig:encdec_performance}
    \end{minipage}
    \vskip -0.1in
\end{figure*}
\label{subsec:HDGP}
To further prove that the misaligned dataset hinders the accurate learning of the surrogate model (\textit{i.e.,} misalignment problem), we compare the performance of the surrogate model trained with aligned and misaligned datasets on the med2 task.
Figure~\ref{fig:HDGP} shows the fitting results of the surrogate model trained with encoder triplets ($g^{\text{enc}}$, left) and decoder triplets ($g^{\text{dec}}$, right), respectively.
Figure~\ref{fig:HDGP} demonstrates that the surrogate model $g^{\text{dec}}$ approximates the objective function accurately, while $g^{\text{enc}}$ fails to emulate the objective function.
Further details of an experiment are provided in Section~\ref{supp_sec:HDGP}.

In Figure~\ref{fig:encdec_performance}, we compare the optimization results of CoBO using decoder triplets and encoder triplets on the valt and med2 tasks.
Both models use the potential-aware anchor selection, and the decoder triplets are made by our inversion method.
CoBO using the decoder triplets shows superior optimization performance over the CoBO using the encoder triplets on both tasks.
This implies that the misaligned dataset hinders the accurate learning of the surrogate model, which leads to suboptimal optimization results, and the inversion method handles the problem by generating the decoder triplet.

\subsection{Effects of Inversion on Proposition~\ref{theorem:TR_upperbound}}
\begin{figure*}[t]
    \centering
    \begin{minipage}[t]{0.49\textwidth}
        \centering
        \raisebox{1mm}{\includegraphics[width=\textwidth]{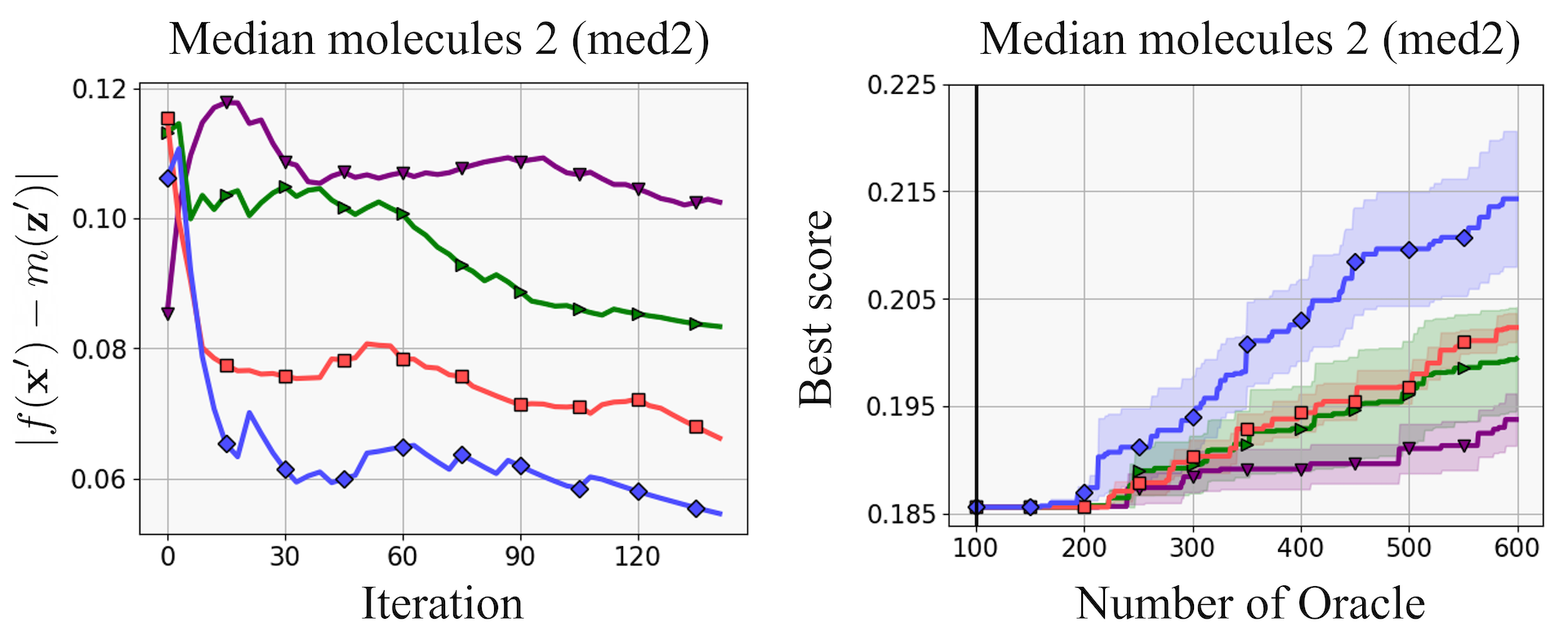}}
        \includegraphics[width=\textwidth]{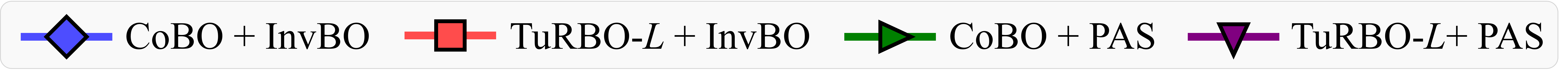}
        \caption{(Left) Gaussian process prediction error within the trust region. (Right) Optimization results on med2 tasks.}
        \label{fig:prop_final}
    \end{minipage}
    \hspace{0.005\textwidth}
    \begin{minipage}[t]{0.49\textwidth}
        \centering
        \includegraphics[width=\textwidth]{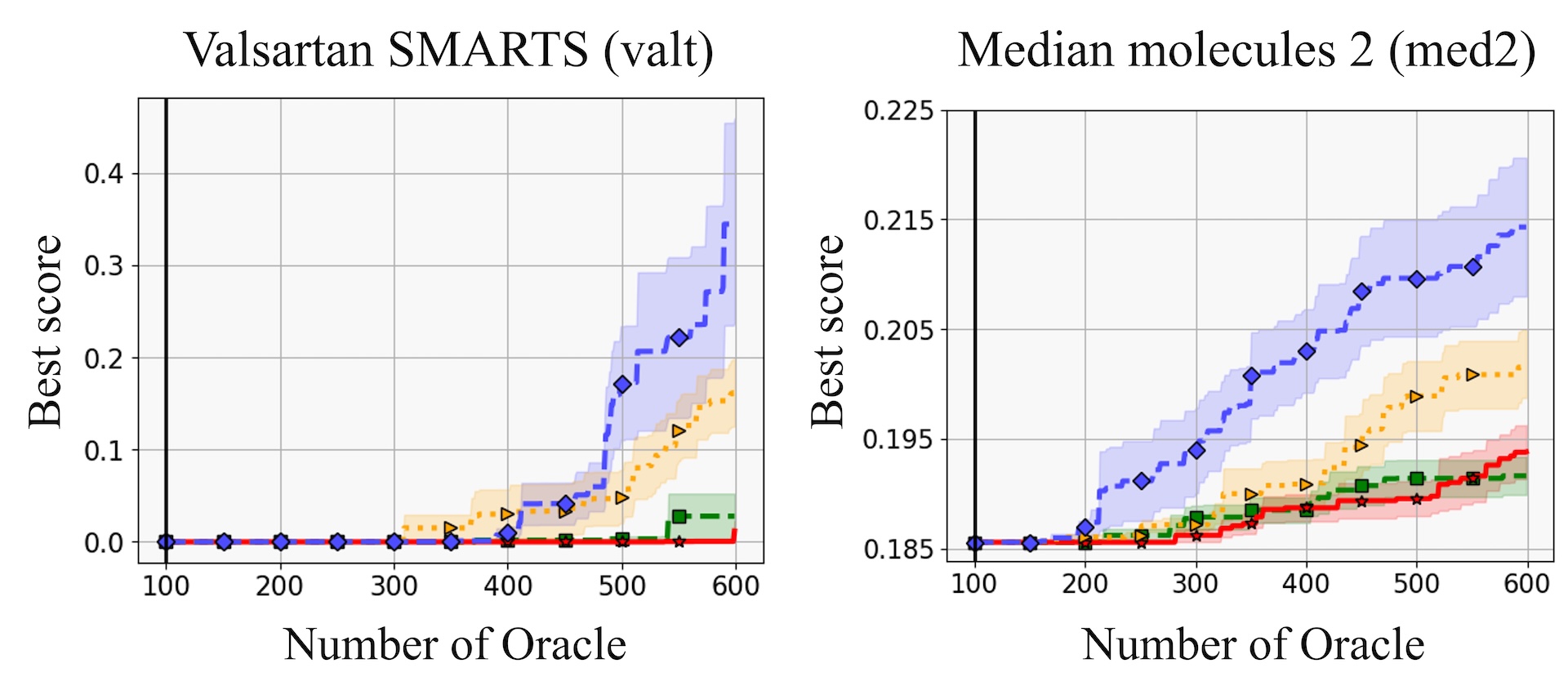}
        \includegraphics[width=\textwidth]{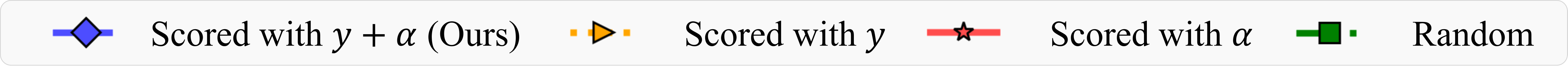}
        \caption{Optimization results with diverse trust region anchor selections on valt and med2 tasks.}
        \label{fig:valt_med2_PAS}
    \end{minipage}
    \vskip -0.20in
\end{figure*}
In Section~\ref{sec:inversion}, we provide the upper bound of the error between the predicted and ground-truth objective value. 
In Eq.~\eqref{eq:theorem_upperbound}, the Lipschitz constant of the objective function $L_1$ and the trust region radius $\delta$ is fixed or the hyper-parameter and the constant $c$ can be improved by learning the surrogate model.
In the end, we can reduce the upper bound of the objective value prediction error by reducing the three components: $\gamma$, $L_2$, and $L_3$, which implies the distance between $\mathbf{x}$ and $p_\theta(\mathbf{z})$, the Lipschitz constant of function $m$ and $f \circ p_\theta$, respectively.
Our inversion method reduce $\gamma$ by searching the latent vector $\mathbf{z}_{\text{inv}}$ that satisfies $\mathbf{x} = p_\theta(\mathbf{z}_{\text{inv}})$.
Previously, CoBO~\cite{lee2023advancing} proposed regularization losses that reduce $L_3$. 
Since the surrogate model emulates the composite function $f \circ p_\theta$, these regularization losses can reduce $L_2$ along with $L_3$.

Figure~\ref{fig:prop_final} (left) shows the regularization losses of CoBO and our inversion method reduces the objective value prediction error.
CoBO-based models (\textit{i.e.,} CoBO+PAS and CoBO+InvBO) employ the CoBO regularization losses, and models with InvBO (\textit{i.e.,} TuRBO-$L$+InvBO and CoBO+InvBO) employ our inversion method.
TuRBO-$L$ does not use the CoBO regularization losses nor our inversion method, and models with PAS (\textit{i.e.,} TuRBO-$L$+PAS and CoBO+PAS) employ encoder triplets.
Applying the regularization losses and our inversion method reduces the objective value prediction error, respectively, but our inversion method shows a larger error reduction.
The combination of regularization losses and our inversion shows the smallest prediction error, which implies our inversion method robustly complements existing methods.
We provide the optimization results of each model on the med2 task in Figure~\ref{fig:prop_final} (right). 
These results demonstrate that reducing the objective function prediction error plays a crucial role in optimization performance.

\subsection{Comparing Diverse Anchor Selection Methods}
To further prove the importance of our potential-aware anchor selection method, we perform BO with the diverse anchor selection methods: random, acquisition $\alpha$, and objective score $y$.
Random indicates randomly selected anchors, and acquisition and objective indicate anchors are selected based on the max acquisition function value and objective score, respectively.
All models use the inversion method, and the optimization results on valt and med2 tasks are in Figure~\ref{fig:valt_med2_PAS}.
Ours and objective score-based anchor selection rapidly find high-quality data compared to the random and acquisition-based selections.
However, objective score-based anchor selection shows inferior performance than our potential-aware anchor selection.
This indicates that both the uncertainty of the surrogate model and objective function value need to be considered for exploration and exploitation.
\section{Conclusion}
\label{sec:conclusion}
We propose Inversion-based Latent Bayesian Optimization~(InvBO), a plug-and-play module for LBO.
We introduce the inversion method that inverts the decoder to find the latent vector for generating the aligned dataset.
Additionally, we present the potential-aware trust region anchor selection that considers not only the corresponding objective function value of the anchor but also the potential ability of the trust region.
From our experimental results, InvBO achieves state-of-the-art performance on nine LBO benchmark tasks.
We also theoretically demonstrate the effectiveness of our inversion method and provide a comprehensive analysis to show the effectiveness of our InvBO.

\section*{Broader Impacts}
One of the contributions of this paper is molecular design optimization, which requires careful consideration due to its unintentional applications such as the generation of toxic.
We believe that our work has a lot of positive aspects to accelerate the development of chemical and drug discovery with an inversion-based latent Bayesian optimization method.

\section*{Limitations}
The performance of the methods proposed in the paper depends on the quality of the generative model. For example, if the objective function is related to the molecule property, the generated model such as the VAE should have the ability to generate the proper molecule to have good performance of optimization.

\section*{Acknowledgement}
This work was partly supported by ICT Creative Consilience Program through the Institute of Information \& Communications Technology Planning \& Evaluation (IITP) (IITP-2024-RS-2020-II201819, 10\%) and the National Research Foundation of Korea (NRF) (NRF-2023R1A2C2005373, 45\%) grant funded by the Korea government (MSIT), and the Virtual Engineering Platform Project (Grant No. P0022336, 45\%), funded by the Ministry of Trade, Industry \& Energy (MoTIE, South Korea).

\bibliographystyle{unsrt} 
\bibliography{Reference}

\clearpage
\appendix
\section{A Proof of Proposition 1}
\label{supp_sec:theorem}
In this section, we provide the proof of Proposition~\ref{theorem:TR_upperbound}.
\addtocounter{theorem}{-1}
\begin{proposition}
Let $f$ be a black-box objective function and $m$ be a posterior mean of Gaussian process, $p_\theta$ be a decoder of the variational autoencoder, $c$ be an arbitrarily small constant, $d_{\mathcal{X}}$ and $d_\mathcal{Z}$ be the distance function on input $\mathcal{X}$ and latent $\mathcal{Z}$ spaces, respectively.
The distance function $d_{\mathcal{X}}$ is bounded between 0 and 1, inclusive.
We assume that $f$, $m$ and the composite function of $f$ and $p_\theta$ are $L_1, L_2$, and $L_3$-Lipschitz continuous functions, respectively.
Suppose the following assumptions are satisfied:
\begin{equation}
\begin{split}
    \lvert f(\mathbf{x}) - m(\mathbf{z})\rvert \le c, \\
    d_{\mathcal{X}} (\mathbf{x}, p_\theta(\mathbf{z}) ) \le \gamma.
\end{split}
\end{equation}
Then the difference between the posterior mean of the arbitrary point $\mathbf{z}'$ in the trust region centered at $\mathbf{z}$ with trust radius $\delta$ and the black box objective value is upper bounded as:
\begin{equation}
|f(p_\theta(\mathbf{z}')) - m(\mathbf{z'})| \le c +  \gamma \cdot L_1 + \delta\cdot (L_2 + L_3),
\end{equation}
where $d_{\mathcal{Z}}(\mathbf{z}, \mathbf{z'})\le \delta$.
\end{proposition}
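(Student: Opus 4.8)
<br>

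The plan is to bound the quantity $|f(p_\theta(\mathbf{z}')) - m(\mathbf{z}'))|$ by inserting two intermediate quantities — the value $m(\mathbf{z})$ at the anchor and the value $f(p_\theta(\mathbf{z}))$ — and then applying the triangle inequality to split the error into three manageable pieces. Concretely, I would write
\begin{equation}
\begin{split}
|f(p_\theta(\mathbf{z}')) - m(\mathbf{z}')|
&\le |f(p_\theta(\mathbf{z}')) - f(p_\theta(\mathbf{z}))| \\
&\quad + |f(p_\theta(\mathbf{z})) - m(\mathbf{z})| \\
&\quad + |m(\mathbf{z}) - m(\mathbf{z}')|.
\end{split}
\end{equation}
The first and third terms are controlled by the Lipschitz hypotheses and the trust-region constraint $d_{\mathcal{Z}}(\mathbf{z},\mathbf{z}')\le\delta$: since $f\circ p_\theta$ is $L_3$-Lipschitz, the first term is at most $L_3\,\delta$, and since $m$ is $L_2$-Lipschitz, the third term is at most $L_2\,\delta$. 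Together these give the $\delta(L_2+L_3)$ contribution.

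The middle term requires a further decomposition, since the assumption $|f(\mathbf{x}) - m(\mathbf{z})|\le c$ is stated for the original data point $\mathbf{x}$, not for $p_\theta(\mathbf{z})$. I would split it as
\begin{equation}
|f(p_\theta(\mathbf{z})) - m(\mathbf{z})|
\le |f(p_\theta(\mathbf{z})) - f(\mathbf{x})| + |f(\mathbf{x}) - m(\mathbf{z})|
\le |f(p_\theta(\mathbf{z})) - f(\mathbf{x})| + c,
\end{equation}
using the second assumption directly. The remaining piece $|f(p_\theta(\mathbf{z})) - f(\mathbf{x})|$ is where the inversion hypothesis enters: $f$ is $L_1$-Lipschitz, so this is bounded by $L_1 \cdot d_{\mathcal{X}}(\mathbf{x}, p_\theta(\mathbf{z}))$ — here I am reading ``$L_1$-Lipschitz'' with respect to the metric $d_{\mathcal{X}}$ on the input space — and by the assumption $d_{\mathcal{X}}(\mathbf{x},p_\theta(\mathbf{z}))\le\gamma$ this is at most $\gamma L_1$. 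Summing the three bounds yields exactly $c + \gamma L_1 + \delta(L_2+L_3)$, as claimed.

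The step I expect to be the main (really the only) subtlety is the treatment of $|f(p_\theta(\mathbf{z})) - f(\mathbf{x})|$: one must be careful that the Lipschitz constant $L_1$ of $f$ is taken with respect to the same distance function $d_{\mathcal{X}}$ that appears in the inversion bound, and that the boundedness of $d_{\mathcal{X}}$ in $[0,1]$ is what makes the quantity $\gamma$ (and hence the whole bound) meaningful and finite — e.g. the normalized Levenshtein distance used in the paper. Everything else is a routine application of the triangle inequality and the stated Lipschitz assumptions, so once the decomposition above is written down the proof essentially closes itself.
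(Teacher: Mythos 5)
Your proposal is correct and matches the paper's own argument: the paper performs the same four-term triangle-inequality decomposition (into $|f(\mathbf{x})-m(\mathbf{z})|$, $|f(p_\theta(\mathbf{z}))-f(\mathbf{x})|$, $|f(p_\theta(\mathbf{z}'))-f(p_\theta(\mathbf{z}))|$, and $|m(\mathbf{z})-m(\mathbf{z}')|$), bounding each term exactly as you do; your only cosmetic difference is splitting the middle term in a second step rather than all at once.
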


\begin{proof}
With $L$-Lipschitz continuity, we have:
\begin{equation}
\begin{split}
\lvert f(\mathbf{x}) - f(p_\theta(\mathbf{z} ))\rvert &\le L_1 \cdot d_\mathcal{X}(\mathbf{x}, p_\theta(\mathbf{z})) \le \gamma \cdot L_1,
\\
\lvert m(\mathbf{z}) - m(\mathbf{z'})\rvert &\le L_2 \cdot d_\mathcal{Z}(\mathbf{z}, \mathbf{z'}) \le \delta \cdot L_2,
\\
\lvert f(p_\theta(\mathbf{z})) - f(p_\theta(\mathbf{z'}))\rvert &\le L_3 \cdot d_{\mathcal{Z}}(\mathbf{z}, \mathbf{z'}) \le \delta \cdot L_3.
\end{split}
\end{equation}
Thus, the difference between the posterior mean of point within the trust region of length $\delta$ centered at $\mathbf{z}$ and the black-box objective value is upper bounded as follows: 
\begin{equation}
\begin{split}
&|f(p_\theta(\mathbf{z'})) - m(\mathbf{z'})| \\
&= | (f(\mathbf{x}) - m(\mathbf{z})) + (f(p_\theta(\mathbf{z}))- f(\mathbf{x})) + (f(p_\theta(\mathbf{z}'))- f(p_\theta(\mathbf{z}))) +
(m(\mathbf{z}) - m(\mathbf{z'}))|
\\
&\le \lvert f(\mathbf{x}) - m(\mathbf{z}) \rvert+ 
\lvert f(p_\theta(\mathbf{z}) )- f(\mathbf{x} )\rvert + \lvert f(p_\theta(\mathbf{z}') )- f(p_\theta(\mathbf{z}) )\rvert + \lvert m(\mathbf{z}) - m(\mathbf{z'})\rvert
\\
&\le c + \gamma \cdot L_1 + \delta \cdot (L_2 + L_3)
\end{split}
\end{equation}
\end{proof}
\section{Exploration of Potential-aware Trust Region Anchor Selection}
\label{supp_sec:Exploration of PAS}
\begin{figure*}[ht]
    \centering
        \includegraphics[width=0.8\textwidth]{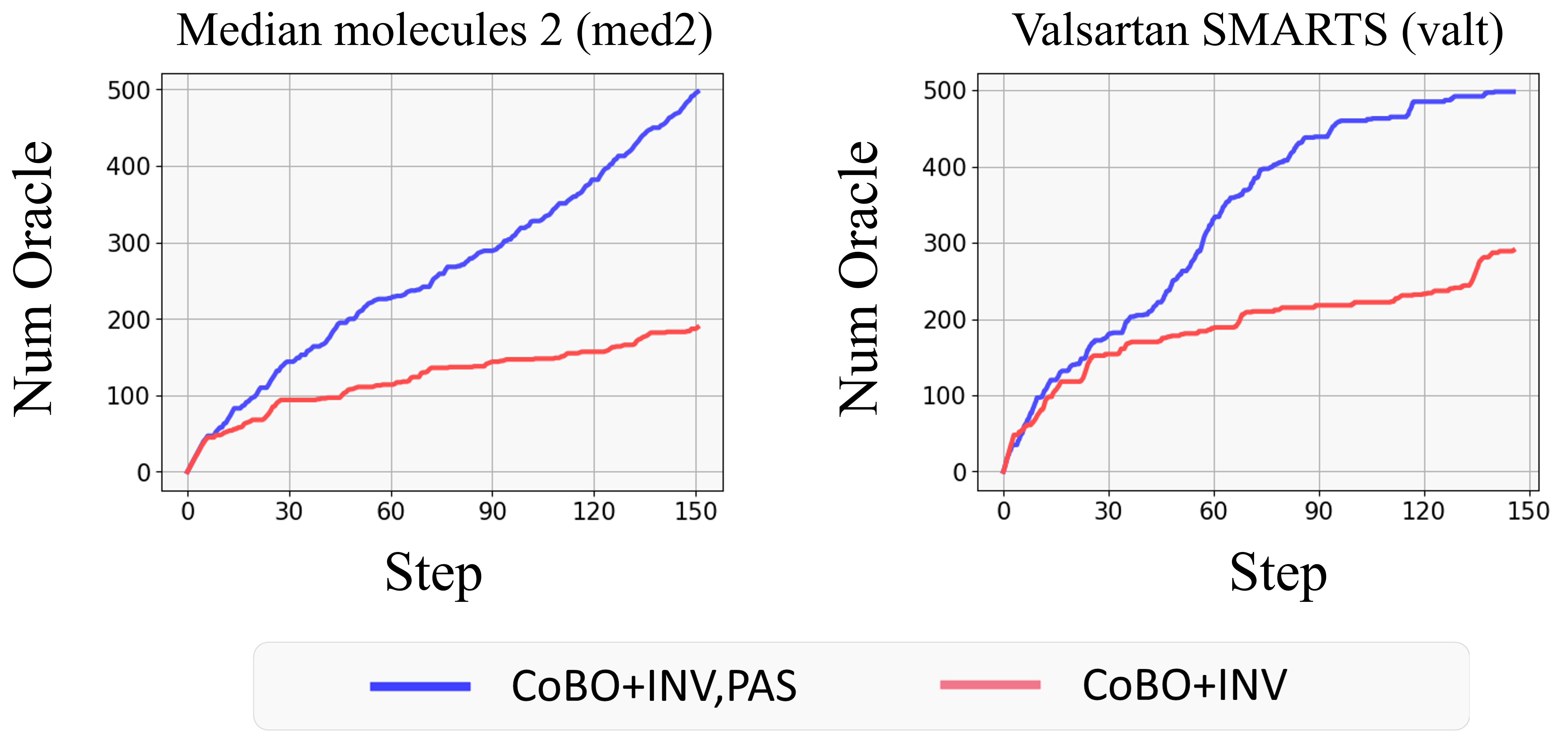}
        \caption{Exploration effectiveness ablation study of the potential aware anchor selection (PAS) on CoBO+InvBO (INV, PAS).}
        \label{fig:pas_exploration}
\end{figure*}
We measure the number of searched unique data samples for each iteration to validate the exploration ability of our potential-aware trust region anchor selection in~\ref{sec:TR_anchor_selection}.
We compare it with the objective score-based anchor selection, which selects the best anchors based on their objective score, used in most prior works~\cite{eriksson2019scalable,maus2022local,lee2023advancing}.
From the figure, our potential-aware anchor selection searches more diverse data compared to the objective score-based anchor selection on both tasks.
These results demonstrate that our anchor selection method improves the exploration ability without the loss of the exploitation ability.
\section{Dissimilarity in Latent Bayesian Optimization}
\label{supp_sec:inversion_recon_error}
\begin{figure*}[ht]
    \centering
        \vspace{-4mm}
        \includegraphics[width=0.7\textwidth]{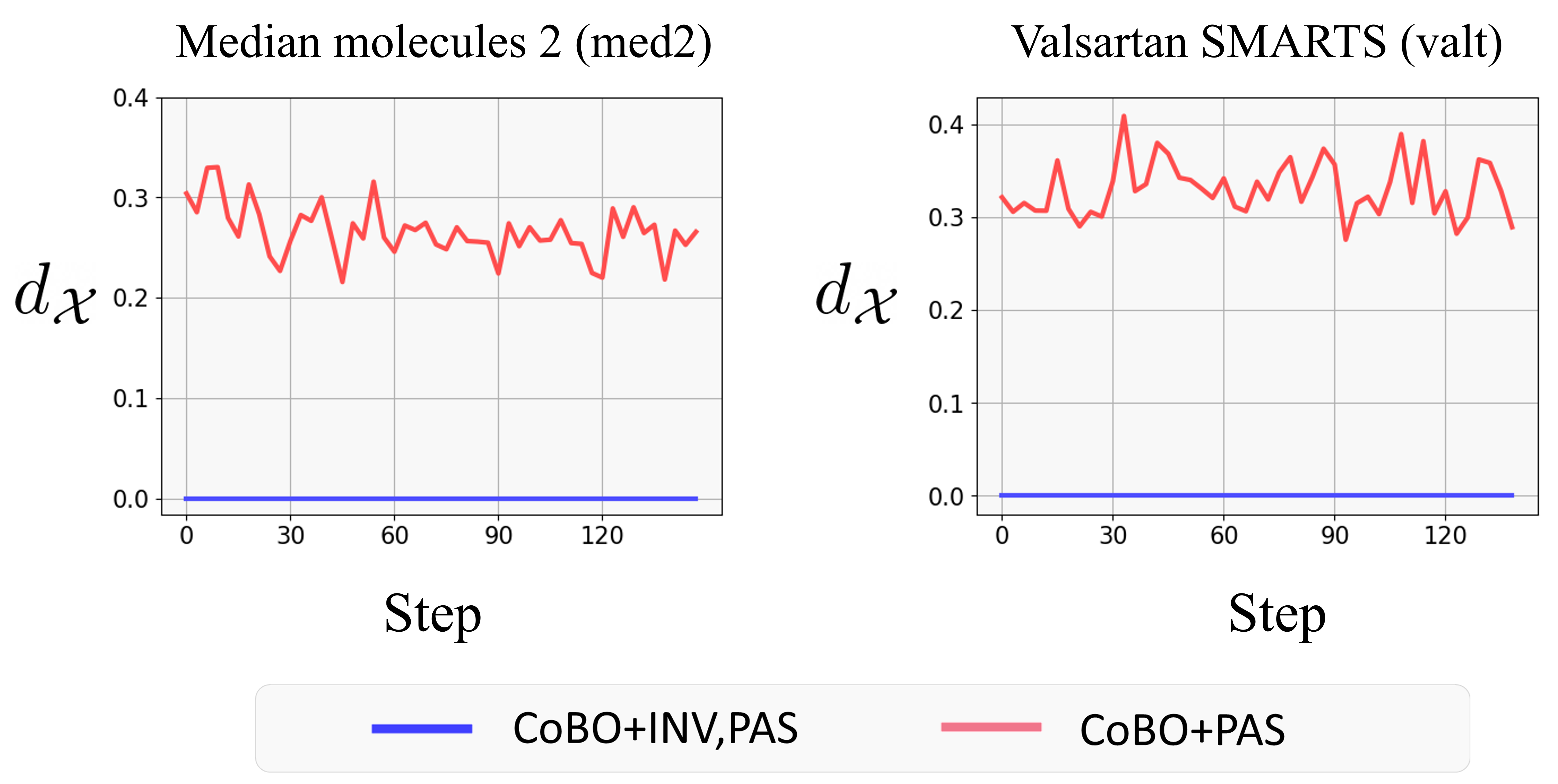}
        \caption{Dissimilarity between $\mathbf{x}^i$ and $p_{\theta}(\mathbf{z}^i)$ with and without inversion on med2 and valt tasks. The measurement of dissimilarity is the normalized Levenshtein distance between the SELFIES token. (x-axis: number of iterations, y-axis: normalized Levenshtein distance.)}
        \label{fig:inversion_dx}
\end{figure*}
We here provide the comparison of with and without inversion to empirically prove that the inversion generates aligned data.
Given a pair of data sample $\mathbf{x}^i$ and its corresponding latent vector $\mathbf{z}^i$, we measure the dissimilarity between the input $\mathbf{x}^i$ and the data $p_\theta(\mathbf{z}^i)$ generated from the decoder.
The latent vector generated with the inversion method $\mathbf{z}^i$ is defined as $\mathbf{z}^i = \arg\min_{\mathbf{z}\in \mathcal{Z}}d_\mathcal{X}(\mathbf{x}^i, p_{\theta}(\mathbf{z}))$~(Section~\ref{sec:inversion}) 
The latent vector without inversion method $\mathbf{z}^i=q_{\phi}(\mathbf{x}^i)$ is constructed by feeding the input $\mathbf{x}^i$ into the encoder $q_{\phi}$ similar to prior LBO works~\cite{tripp2020sample, chen2024pg}.

Figure~\ref{fig:inversion_dx} shows the dissimilarity comparison results without and with the inversion on med2 and valt tasks. We measure the dissimilarity as the normalized Levenshtein distance between two SELFIES tokens.
The x-axis indicates the iteration, and the y-axis indicates the dissimilarity between $\mathbf{x}^i$ and $p_{\theta}(\mathbf{z}^i)$.
From the figure, the inversion achieves zero dissimilarity for every iteration on both tasks, which indicates that the inversion always generates aligned data.
On the other hand, BO without the inversion mostly generates misaligned data.
These results demonstrate the necessity of the inversion in LBO.
\section{Applying PAS to TuRBO on standard BO benchmark}
\label{supp_sec:Applying PAS to TuRBO on standard BO benchmark}
\begin{figure*}[ht]
    \centering
        \vspace{-6mm}
        \includegraphics[width=0.4\textwidth]{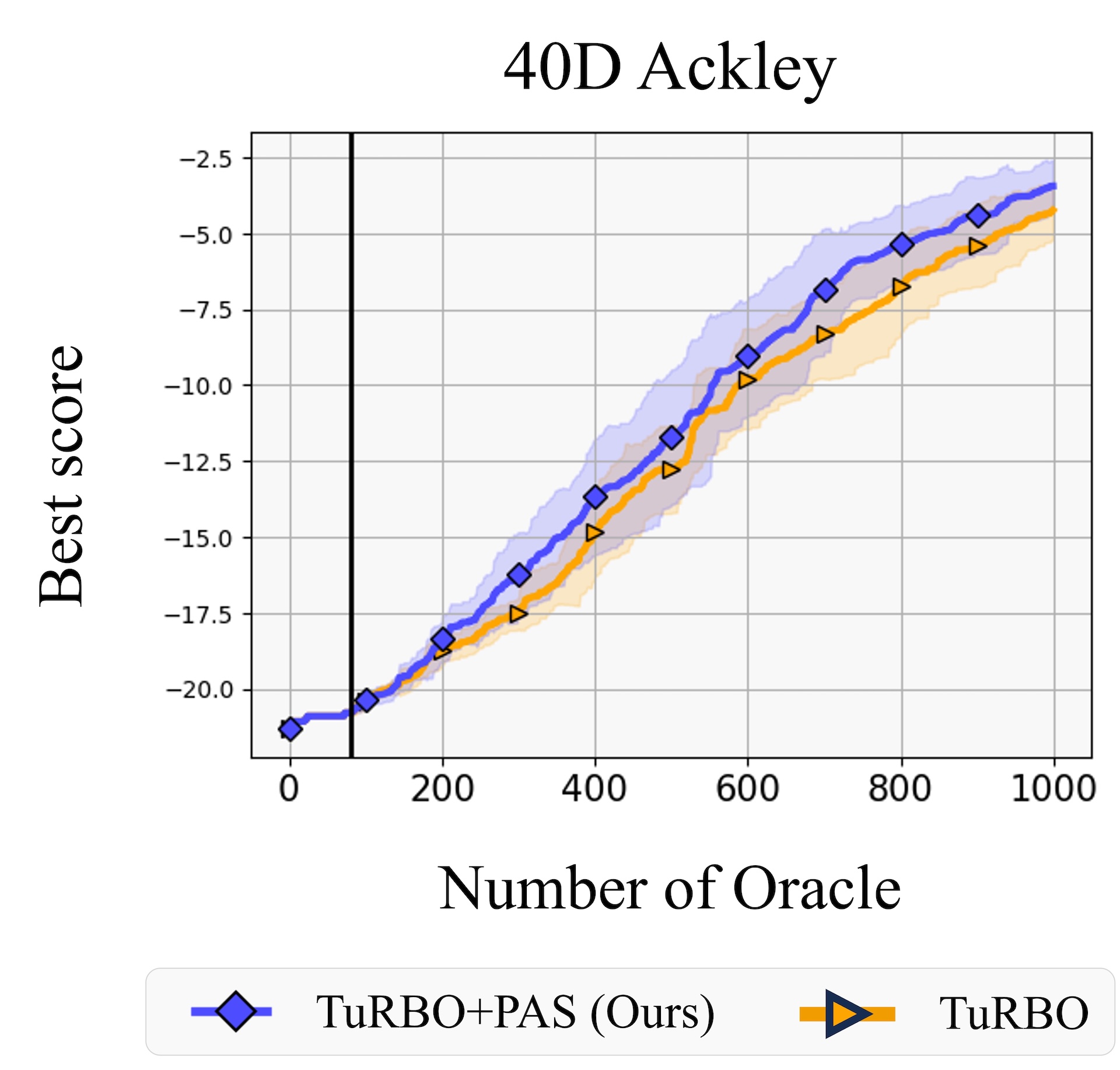}
        \caption{Optimization results of TuRBO and applying PAS to TuRBO on the synthetic Ackley function with 40 dimensions. The lines and ranges indicate the
        mean and a standard deviation of ten runs with different seeds.
        }
        \label{fig:pas_standard}
\end{figure*}
We analyze the effectiveness of PAS along with standard BO approaches.
We provide the optimization performance of TuRBO and TuRBO with PAS in the Ackley benchmark function with 40 dimension, with input ranges from [-32.768, 32.768].
Our implementation is based on the codebase of TuRBO~\cite{eriksson2019scalable} provided in the BoTorch~\cite{balandat2020botorch} tutorial, and we use the RBF kernel with  Automatic
Relevance Determination (ARD) lengthscale~\cite{williams1995gaussian}. 
The number of initial data is 80 and the number of total oracle calls is 1,000.
Figure~\ref{fig:pas_standard} shows that applying PAS on TuRBO consistently improves the optimization performance during the iteration.
This result demonstrates that PAS is also effective in standard BO benchmark tasks.
\section{Experiments of Plug-and-Play}
\label{supp_sec:plugin}

\begin{table*}[htbp]
\caption{Optimization results of applying InvBO or inversion method (INV) to several LBOs on Guacamol benchmark tasks and arithmetic expression task, including the task in Table~\ref{table:plugin}. A higher score is better for all tasks except the arithmetic expression task.}
    \centering
    \begin{adjustbox}{width=0.92\textwidth}
    \begin{tabular}{c||ccc||ccc}

\toprule
     Task &\multicolumn{3}{c||}{Aripiprazole similarity (adip)} & \multicolumn{3}{c}{Median molecules 2 (med2)}\\
\midrule
     Num Oracle & 100 & 300 & 500 & 100 & 300 & 500 \\
\midrule
\midrule
    LBO             & 0.493$\pm$0.003 & 0.497$\pm$0.005 & 0.499$\pm$0.006 & 0.186$\pm$0.000 & 0.186$\pm$0.000 & 0.186$\pm$0.000 \\ 
    LBO + INV       & \gc \textbf{0.495$\pm$0.002} & \gc \textbf{0.501$\pm$0.006} & \gc \textbf{0.509$\pm$0.008} & 0.186$\pm$0.000 & \gc \textbf{0.188$\pm$0.002} & \gc \textbf{0.190$\pm$0.002} \\
\midrule
    W-LBO           & 0.497$\pm$0.003 & 0.499$\pm$0.006 & 0.502$\pm$0.006 & 0.186$\pm$0.000 & 0.186$\pm$0.000 & 0.186$\pm$0.000 \\ 
    W-LBO + INV     & \gc \textbf{0.507$\pm$0.001} & \gc \textbf{0.523$\pm$0.002} & \gc \textbf{0.526$\pm$0.002} & 0.186$\pm$0.000 & \gc \textbf{0.206$\pm$0.001} & \gc \textbf{0.207$\pm$0.001} \\
\midrule
    PG-LBO          & 0.497$\pm$0.003 & 0.518$\pm$0.008 & 0.528$\pm$0.008 & 0.186$\pm$0.000 & 0.186$\pm$0.000 & 0.186$\pm$0.000 \\ 
    PG-LBO + INV    & \gc \textbf{0.498$\pm$0.003} & \gc \textbf{0.523$\pm$0.006} & \gc \textbf{0.558$\pm$0.007} & 0.186$\pm$0.000 & 0.186$\pm$0.000 & \gc \textbf{0.191$\pm$0.002} \\
\midrule
    TuRBO-L         & 0.491$\pm$0.000 & 0.491$\pm$0.000 & 0.491$\pm$0.000 & 0.186$\pm$0.000 & 0.186$\pm$0.000 & 0.186$\pm$0.000 \\ 
    TuRBO-L + InvBO & \gc \textbf{0.499$\pm$0.006} & \gc \textbf{0.518$\pm$0.008} & \gc \textbf{0.541$\pm$0.008} & 0.186$\pm$0.000 & \gc \textbf{0.194$\pm$0.002} & \gc \textbf{0.202$\pm$0.001} \\
\midrule
    LOL-BO          & 0.491$\pm$0.000 & 0.501$\pm$0.004 & 0.512$\pm$0.005 & 0.186$\pm$0.000 & 0.186$\pm$0.000 & 0.190$\pm$0.001 \\ 
    LOL-BO + InvBO  & \gc \textbf{0.500$\pm$0.002} & \gc \textbf{0.545$\pm$0.009} & \gc \textbf{0.578$\pm$0.011} & \gc \textbf{0.189$\pm$0.002} & \gc \textbf{0.204$\pm$0.005} & \gc \textbf{0.227$\pm$0.010} \\
\midrule
    CoBO            & 0.491$\pm$0.000 & 0.501$\pm$0.004 & 0.509$\pm$0.005 & 0.186$\pm$0.000 & 0.188$\pm$0.002 & 0.191$\pm$0.003 \\ 
    CoBO + InvBO    & \gc \textbf{0.502$\pm$0.004} & \gc \textbf{0.542$\pm$0.006} & \gc \textbf{0.581$\pm$0.008} & \gc \textbf{0.187$\pm$0.001} & \gc \textbf{0.203$\pm$0.004} & \gc \textbf{0.214$\pm$0.006} \\
\midrule[0.03cm]

\toprule
     Task &\multicolumn{3}{c||}{Osimertinib MPO (osmb)} & \multicolumn{3}{c}{Valsartan SMARTS (valt)}\\
\midrule
     Num Oracle & 100 & 300 & 500 & 100 & 300 & 500 \\
\midrule
\midrule
    LBO             & 0.762$\pm$0.000 & 0.763$\pm$0.001 & 0.769$\pm$0.005 & 0.000$\pm$0.000 & 0.000$\pm$0.000 & 0.000$\pm$0.000 \\ 
    LBO + INV       & \gc \textbf{0.767$\pm$0.002} & \gc \textbf{0.773$\pm$0.004} & \gc \textbf{0.782$\pm$0.004} & 0.000$\pm$0.000 & 0.000$\pm$0.000 & 0.000$\pm$0.000 \\
\midrule
    W-LBO           & 0.762$\pm$0.001 & 0.766$\pm$0.003 & 0.771$\pm$0.004 & 0.000$\pm$0.000 & 0.000$\pm$0.000 & 0.001$\pm$0.001 \\ 
    W-LBO + INV     & \gc \textbf{0.770$\pm$0.001} & \gc \textbf{0.786$\pm$0.001} & \gc \textbf{0.789$\pm$0.001} & 0.000$\pm$0.000 & \gc \textbf{0.002$\pm$0.001} & \gc \textbf{0.116$\pm$0.010} \\
\midrule
    PG-LBO          & 0.763$\pm$0.002 & 0.764$\pm$0.002 & 0.771$\pm$0.003 & 0.000$\pm$0.000 & 0.000$\pm$0.000 & 0.000$\pm$0.000 \\ 
    PG-LBO + INV    & \gc \textbf{0.765$\pm$0.003} & \gc \textbf{0.784$\pm$0.005} & \gc \textbf{0.804$\pm$0.004} & 0.000$\pm$0.000 & 0.000$\pm$0.000 & \gc \textbf{0.159$\pm$0.079} \\
\midrule
    TuRBO-L         & 0.762$\pm$0.000 & 0.762$\pm$0.000 & 0.762$\pm$0.000 & 0.000$\pm$0.000 & 0.000$\pm$0.000 & 0.000$\pm$0.000 \\ 
    TuRBO-L + InvBO & \gc \textbf{0.765$\pm$0.002} & \gc \textbf{0.785$\pm$0.003} & \gc \textbf{0.799$\pm$0.004} & 0.000$\pm$0.000 & \gc \textbf{0.024$\pm$0.017} & \gc \textbf{0.212$\pm$0.092} \\
\midrule
    LOL-BO          & 0.762$\pm$0.000 & 0.769$\pm$0.002 & 0.777$\pm$0.003 & 0.000$\pm$0.000 & 0.000$\pm$0.000 & 0.000$\pm$0.000 \\ 
    LOL-BO + InvBO  & \gc \textbf{0.775$\pm$0.002} & \gc \textbf{0.797$\pm$0.006} & \gc \textbf{0.807$\pm$0.005} & 0.000$\pm$0.000 & \gc \textbf{0.007$\pm$0.005} & \gc \textbf{0.171$\pm$0.039} \\
\midrule
    CoBO            & 0.762$\pm$0.000 & 0.763$\pm$0.001 & 0.772$\pm$0.003 & 0.000$\pm$0.000 & 0.000$\pm$0.000 & 0.000$\pm$0.000 \\ 
    CoBO + InvBO    & \gc \textbf{0.769$\pm$0.002} & \gc \textbf{0.795$\pm$0.004} & \gc \textbf{0.804$\pm$0.004} & 0.000$\pm$0.000 & \gc \textbf{0.042$\pm$0.013} & \gc \textbf{0.348$\pm$0.107} \\
\midrule[0.03cm]

\toprule
     Task &\multicolumn{3}{c||}{Perindopril MPO (pdop)} & \multicolumn{3}{c}{Ranolazine MPO (rano)}\\
\midrule
     Num Oracle & 100 & 300 & 500 & 100 & 300 & 500 \\
\midrule
\midrule
    LBO             & 0.458$\pm$0.002 & 0.458$\pm$0.002 & 0.458$\pm$0.002 & 0.650$\pm$0.006 & 0.655$\pm$0.007 & 0.664$\pm$0.012 \\ 
    LBO + INV       & \gc \textbf{0.466$\pm$0.007} & \gc \textbf{0.466$\pm$0.007} & \gc \textbf{0.469$\pm$0.007} & \gc \textbf{0.658$\pm$0.007} & \gc \textbf{0.669$\pm$0.008} & \gc \textbf{0.682$\pm$0.009} \\
\midrule
    W-LBO           & 0.460$\pm$0.004 & 0.462$\pm$0.004 & 0.464$\pm$0.005 & 0.649$\pm$0.006 & 0.674$\pm$0.007 & 0.681$\pm$0.005 \\ 
    W-LBO + INV     & \gc \textbf{0.468$\pm$0.001} & \gc \textbf{0.478$\pm$0.002} & \gc \textbf{0.483$\pm$0.001} & \gc \textbf{0.683$\pm$0.002} & \gc \textbf{0.700$\pm$0.003} & \gc \textbf{0.707$\pm$0.003} \\
\midrule
    PG-LBO          & 0.458$\pm$0.002 & 0.458$\pm$0.002 & 0.465$\pm$0.004 & 0.650$\pm$0.003 & 0.681$\pm$0.009 & 0.720$\pm$0.008 \\ 
    PG-LBO + INV    & 0.458$\pm$0.002 & \gc \textbf{0.466$\pm$0.005} & \gc \textbf{0.490$\pm$0.009} & \gc \textbf{0.674$\pm$0.010} & \gc \textbf{0.713$\pm$0.013} & \gc \textbf{0.729$\pm$0.016} \\
\midrule
    TuRBO-L         & 0.456$\pm$0.000 & 0.456$\pm$0.000 & 0.456$\pm$0.000 & 0.642$\pm$0.000 & 0.642$\pm$0.000 & 0.650$\pm$0.007 \\ 
    TuRBO-L + InvBO & \gc \textbf{0.470$\pm$0.005} & \gc \textbf{0.506$\pm$0.008} & \gc \textbf{0.534$\pm$0.008} & \gc \textbf{0.688$\pm$0.004} & \gc \textbf{0.743$\pm$0.011} & \gc \textbf{0.791$\pm$0.013} \\
\midrule
    LOL-BO          & 0.456$\pm$0.000 & 0.462$\pm$0.004 & 0.470$\pm$0.004 & 0.642$\pm$0.000 & 0.671$\pm$0.007 & 0.688$\pm$0.010 \\ 
    LOL-BO + InvBO  & \gc \textbf{0.512$\pm$0.017} & \gc \textbf{0.546$\pm$0.014} & \gc \textbf{0.565$\pm$0.014} & \gc \textbf{0.699$\pm$0.004} & \gc \textbf{0.762$\pm$0.012} & \gc \textbf{0.787$\pm$0.014} \\
\midrule
    CoBO            & 0.456$\pm$0.000 & 0.460$\pm$0.004 & 0.461$\pm$0.004 & 0.642$\pm$0.000 & 0.654$\pm$0.006 & 0.705$\pm$0.007 \\ 
    CoBO + InvBO    & \gc \textbf{0.506$\pm$0.011} & \gc \textbf{0.538$\pm$0.014} & \gc \textbf{0.561$\pm$0.015} & \gc \textbf{0.686$\pm$0.004} & \gc \textbf{0.738$\pm$0.012} & \gc \textbf{0.756$\pm$0.015} \\
\midrule[0.03cm]

\toprule
     Task &\multicolumn{3}{c||}{Zaleplon MPO (zale)} & \multicolumn{3}{c}{Arithmetic expression}\\
\midrule
     Num Oracle & 100 & 300 & 500 & 100 & 300 & 500 \\
\midrule
\midrule
    LBO             & 0.344$\pm$0.009 & 0.357$\pm$0.010 & 0.368$\pm$0.009 & \gc \textbf{1.274$\pm$0.147} & 0.512$\pm$0.046 & 0.425$\pm$0.020 \\ 
    LBO + INV       & \gc \textbf{0.370$\pm$0.015} & \gc \textbf{0.387$\pm$0.010} & \gc \textbf{0.394$\pm$0.011} & 1.306$\pm$0.128 & \gc \textbf{0.506$\pm$0.114} & \gc \textbf{0.317$\pm$0.048} \\
\midrule
    W-LBO           & 0.369$\pm$0.014 & 0.390$\pm$0.013 & 0.407$\pm$0.014 & \gc \textbf{1.280$\pm$0.169} & 0.686$\pm$0.103 & 0.538$\pm$0.068 \\ 
    W-LBO + INV     & \gc \textbf{0.383$\pm$0.003} & \gc \textbf{0.416$\pm$0.002} & \gc \textbf{0.423$\pm$0.002} & 1.524$\pm$0.008 & \gc \textbf{0.466$\pm$0.006} & \gc \textbf{0.445$\pm$0.006} \\
\midrule
    PG-LBO          & 0.369$\pm$0.009 & 0.410$\pm$0.012 & 0.415$\pm$0.009 & \gc \textbf{1.290$\pm$0.146} & 0.913$\pm$0.140 & 0.832$\pm$0.146 \\ 
    PG-LBO + INV    & \gc \textbf{0.372$\pm$0.016} & \gc \textbf{0.420$\pm$0.007} & \gc \textbf{0.452$\pm$0.006} & 1.515$\pm$0.060 & \gc \textbf{0.872$\pm$0.107} & \gc \textbf{0.595$\pm$0.106} \\
\midrule
    TuRBO-L         & 0.327$\pm$0.000 & 0.332$\pm$0.005 & 0.332$\pm$0.005 & 1.424$\pm$0.126 & 0.885$\pm$0.083 & 0.605$\pm$0.094 \\ 
    TuRBO-L + InvBO & \gc \textbf{0.411$\pm$0.017} & \gc \textbf{0.461$\pm$0.015} & \gc \textbf{0.475$\pm$0.016} & \gc \textbf{0.898$\pm$0.161} & \gc \textbf{0.692$\pm$0.139} & \gc \textbf{0.374$\pm$0.060} \\
\midrule
    LOL-BO          & 0.327$\pm$0.000 & 0.398$\pm$0.012 & 0.431$\pm$0.006 & 0.953$\pm$0.109 & 0.578$\pm$0.053 & 0.412$\pm$0.039 \\ 
    LOL-BO + InvBO  & \gc \textbf{0.400$\pm$0.011} & \gc \textbf{0.456$\pm$0.011} & \gc \textbf{0.477$\pm$0.011} & \gc \textbf{0.739$\pm$0.142} & \gc \textbf{0.510$\pm$0.031} & \gc \textbf{0.392$\pm$0.023} \\
\midrule
    CoBO            & 0.327$\pm$0.000 & 0.393$\pm$0.014 & 0.421$\pm$0.010 & \gc \textbf{0.752$\pm$0.122} & 0.526$\pm$0.057 & 0.391$\pm$0.006 \\ 
    CoBO + InvBO    & \gc \textbf{0.435$\pm$0.014} & \gc \textbf{0.495$\pm$0.023} & \gc \textbf{0.513$\pm$0.012} & 0.840$\pm$0.173 & \gc \textbf{0.513$\pm$0.075} & \gc \textbf{0.252$\pm$0.067} \\

\bottomrule

    \end{tabular}
\end{adjustbox}

\label{table:plugin_supp}
\end{table*}
We provide further results of applying our InvBO to previous LBO works, LS-BO, TuRBO-$L$, W-LBO~\cite{tripp2020sample}, LOL-BO~\cite{maus2022local}, PG-LBO~\cite{chen2024pg} and CoBO~\cite{lee2023advancing} on Guacamol and arithmetic fitting tasks.
All results are average scores of ten runs under identical settings.
Since LS-BO, W-LBO, and PG-LBO are not trust region-based LBO, we report the optimization results applying inversion (INV) only on these baselines.
The experimental results are illustrated in Table~\ref{table:plugin_supp}.
The table shows that applying our InvBO to all previous LBO works consistently improve the optimization performance in all Guacamol and arithmetic expression fitting tasks.
\section{Experiments on Guacamol Benchmark with Small Budget}
\label{supp_sec:Guacamol_small}
\begin{figure*}[ht!]
\centering
\includegraphics[width=\textwidth]{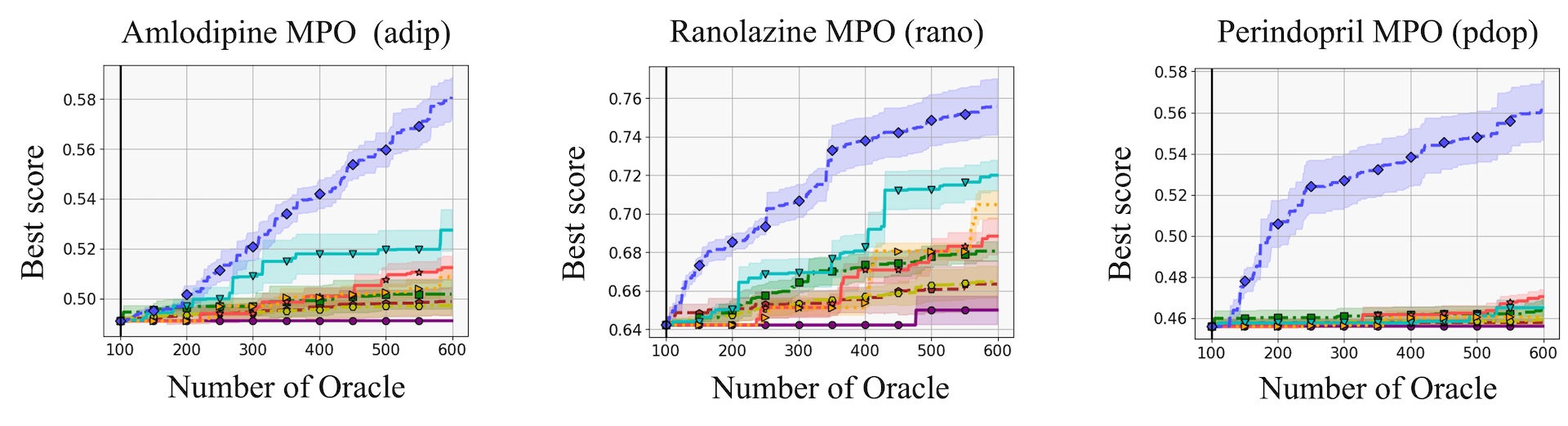}
\includegraphics[width=\textwidth]{Figures/Full_label.png}
    \caption{Optimization results on Guacamol benchmark tasks, excluding the tasks in Figure~\ref{fig:guacamol}. The lines and ranges indicate the average and standard error of ten runs under the same settings. A higher score is a better score.}
    \label{fig:guacamol_supp}
\end{figure*}

Figure~\ref{fig:guacamol_supp} provides the optimization results on three Guacamol benchmark tasks including pdop, adip, and rano.
The y-axis of each subfigure denotes the best-found score, and the x-axis denotes the number of the objective function evaluation.
Our InvBO built on the CoBO shows the state-of-the-art performance on all three tasks.
In particular, CoBO with our InvBO achieves 0.56 best score in the pdop task while the best scores of all other baselines are under 0.48.

\section{Experiments on Guacamol Benchmark with Large Budget}
\label{supp_sec:Guacamol_large}
\begin{figure*}[ht!]
\centering
\includegraphics[width=\textwidth]{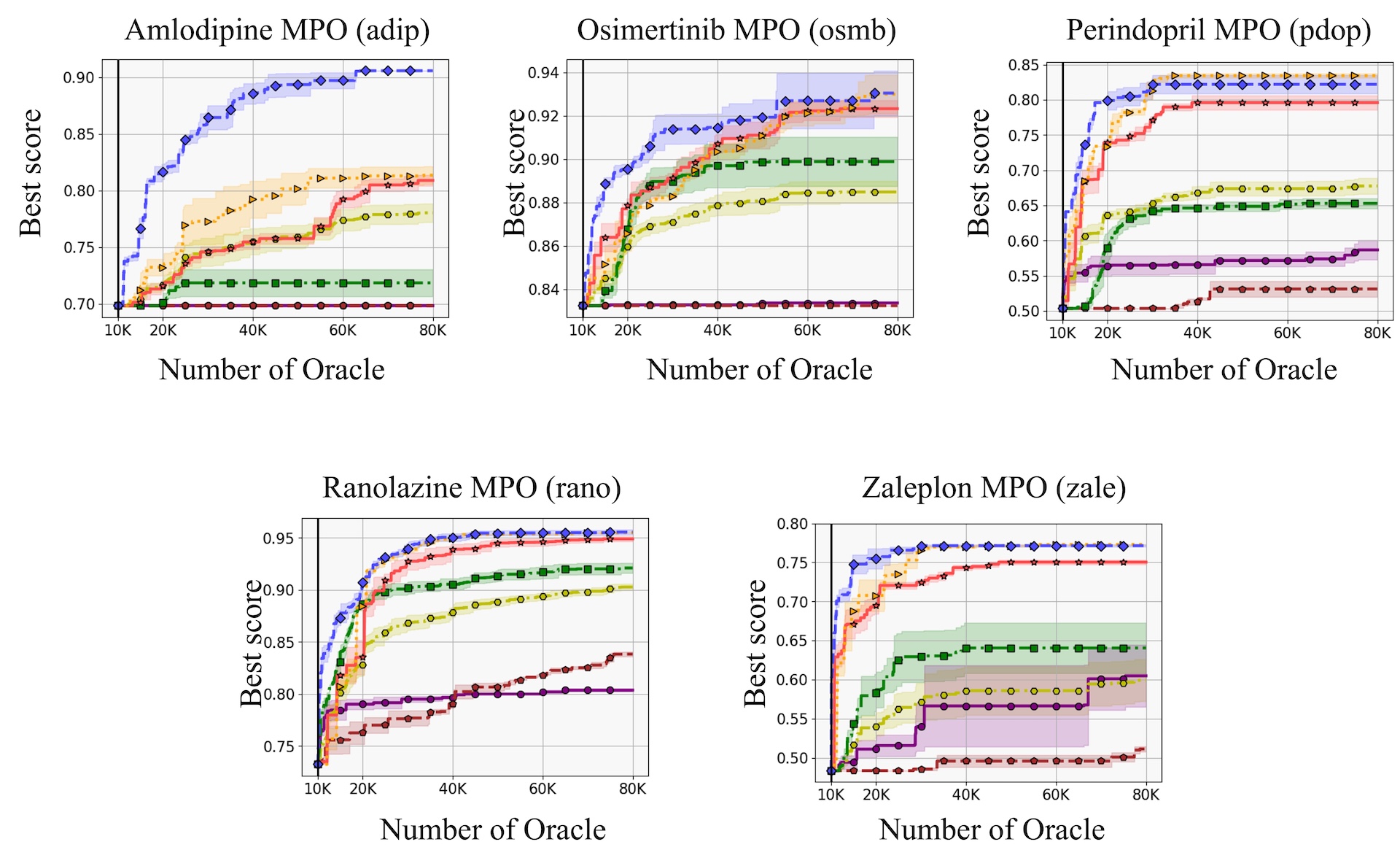}
\includegraphics[width=\textwidth]{Figures/Full_label.png}
    \caption{Optimization results on Guacamol task under the large budget setting, excluding the tasks in Figure~\ref{subfig:large_budget}. The lines and ranges indicate the average and standard error of five runs under the same settings. A higher score is a better score.}
    \label{fig:guacamol_large_supp}
\end{figure*}

We provide further optimization results on five Guacamol benchmark tasks, adip, osmb, pdop, rano, and zale under the large budget setting to demonstrate the effectiveness of our InvBO in diverse budget settings, which are used in previous works~\cite{maus2022local, lee2023advancing}.
The experimental results are illustrated in Figure~\ref{fig:guacamol_large_supp}. 
In the figure, our InvBO to CoBO achieves superior performance on large-budget settings.
Specifically, InvBO built on the CoBO showed a large margin from the baseline and achieved a more than 0.9 best score in adip tasks.
These results imply that our InvBO is effective in diverse settings and tasks.
\section{Analysis on Recentering Technique}
\label{supp_sec:Recentering}
\begin{figure*}[ht]
    \centering
        \includegraphics[width=0.8\textwidth]{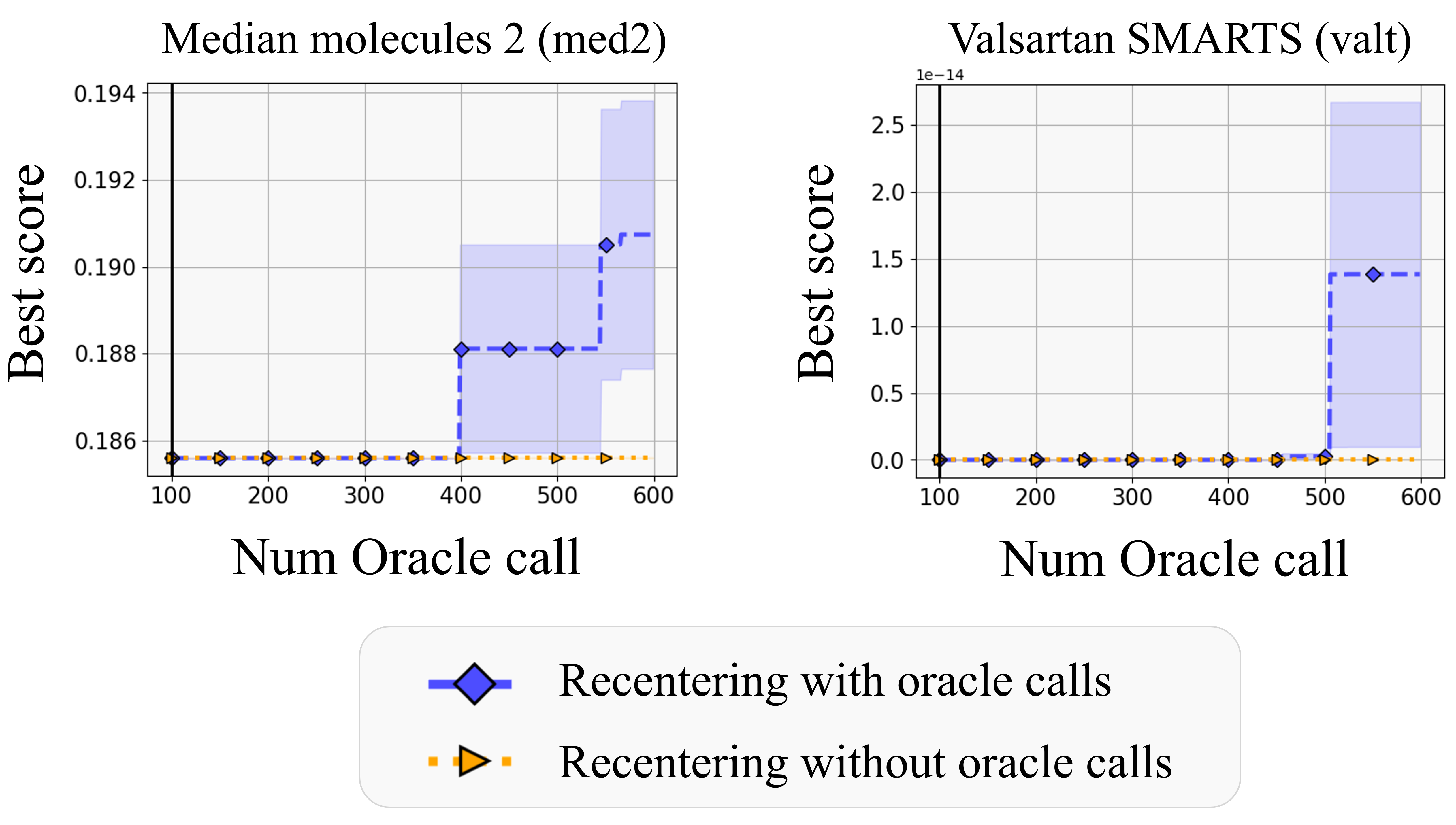}
        \caption{Optimizaiton results on Guacamol tasks, med2 and valt. We compare the CoBO applying InvBO that uses the recentering technique with and without additional oracle calls. The lines and ranges indicate the average and standard error of ten runs under the same settings. A higher score is a better score.}
        \label{fig:recentering}
\end{figure*}
We here provide the details about additional oracle calls in the recentering technique~\cite{maus2022local, lee2023advancing}.
Firstly, CoBO explicitly mentioned the oracle call for recentering in L8-9 of Algorithm 1 in the paper. 
Secondly, LOL-BO’s oracle calls for recentering can be verified by its official GitHub code.
See, L188-228 of \texttt{lolbo/lolbo/lolbo.py} and L54 of \texttt{lolbo/lolbo/latent\_space\_objective.py}.

We present the experiments on additional oracle calls in recentering in Figure~\ref{fig:recentering}.
The figure shows the optimization results of CoBO that use recentering with additional oracle calls and recentering without additional oracle calls on med2 and valt tasks.
In both tasks, CoBO that uses the recentering technique without additional oracle calls fails in optimization while consuming additional oracle calls shows progress.
These results show that the additional oracle calls are essential in the recentering technique, while the previous works do not explicitly mention it.
\section{Analysis on Misalignment Problem}
\label{supp_sec:HDGP}
\begin{figure*}[ht]
    \centering
        \includegraphics[width=0.7\textwidth]{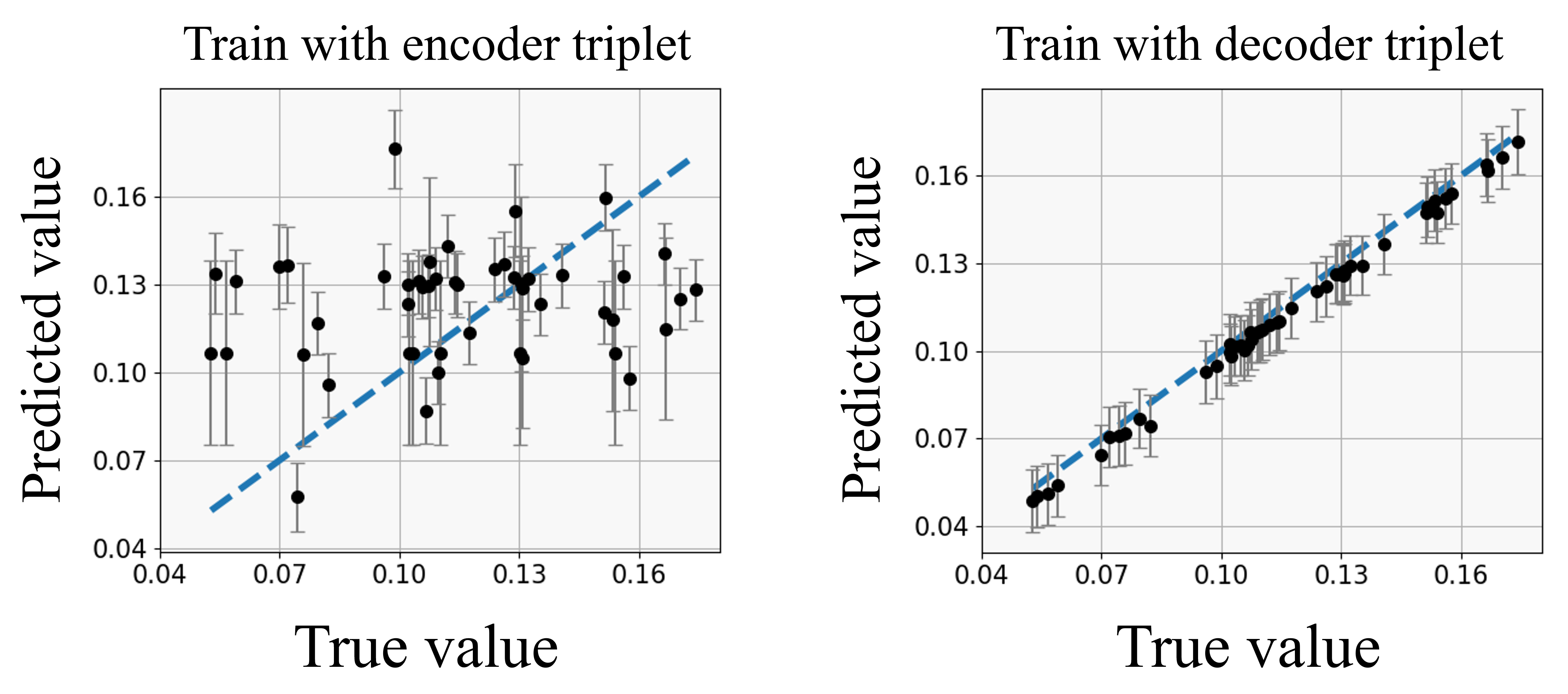}
        \caption{Gaussian process fitting results trained with encoder triplets and decoder triplets. Note that the datapoints are in the training set.}
        \label{fig:HDGP_train}
\end{figure*}
We here provide the experimental details about the Gaussian process fitting with the encoder triplets and the decoder triplets.
We train the surrogate model on 300 train points and predict on 100 test points, all points are randomly sampled.
The encoder triplets $\{(\mathbf{x}^i, q_\phi(\mathbf{x}^i), f(\mathbf{x}^i))\}^n_{i=1}$ and the decoder triplets $\{(\mathbf{x}^i, \mathbf{z}^i_{\text{inv}}, f(\mathbf{x}^i))\}^n_{i=1}$ share the same discrete data $\mathbf{x}$, where $\mathbf{x}^i = p_\theta(\mathbf{z}_{\text{inv}}^i)$.
The dots in the figure denote the mean predictions of the surrogate model and bars denote 95\% confidence intervals. 
Figure~\ref{fig:HDGP_train} provides the Gaussian process fitting results trained with the encoder triplets (Left) and decoder triplets (Right) that predict the training points on the med2 task.
The figure implies that the Gaussian process trained with encoder triplet fails to fit the composite function $f\circ p_\theta$, while trained with decoder triplet emulates the function accurately.
\section{Efficiency Analysis}
\label{supp_sec:Efficiency analysis}
Although Bayesian optimization assumes the objective function is cost-expensive, it is still important to consider the efficiency of the algorithm.
We present an efficiency analysis comparing with baseline methods~\cite{lee2023advancing, maus2022local, tripp2020sample, chen2024pg}, and TuRBO-$L$, LS-BO.
We conducted experiments under the same condition for fair comparison: a single NVIDIA RTX 2080 TI with the CPU of AMD EPYC 7742.
Wall-clock time comparison with the baseline model is in Table~\ref{table:time}.
The table shows that applying our InvBO to CoBO achieves state-of-the-art performance not only under the same oracle calls but also under the same wall-clock time.

\begin{table*}[ht]
    \caption{Wall-clock time and corresponding best score for each model.} \label{table:time}
    \centering
    \vspace{6pt}
    \begin{adjustbox}{width=\textwidth}
    \begin{tabular}{c|c||ccccccc}
    \toprule
    \multicolumn{1}{c|}{Task} & \multicolumn{1}{c|}{Model}                  & CoBO+InvBO & CoBO & PG-LBO & LOL-BO & TuRBO & LS-BO & W-LBO \\
    \midrule
    \midrule
    \multirow{6}{*}{med2}
        &Found Best Score		& \textbf{0.2143} & 0.1907 & 0.1856 & 0.1904 & 0.1856 & 0.1856 & 0.1856\\
        &\gc Oracle call			& \gc 500 & \gc 500 & \gc 500 & \gc 500 & \gc 500 & \gc 500 & \gc 500 \\
        &Wall-clock time (s)	& 420.12 & 285.02 & 1582.23 & 308.45 & 842.25 & 152.31 & 573.21 \\
    \cmidrule{2-9}
        &Found Best Score		& \textbf{0.1938} & 0.1883 & 0.1856 & 0.1856 & 0.1856 & 0.1856 & 0.1856 \\
        &Oracle calls			& 284 & 314 & 55 & 286 & 182 & 400 & 236 \\
        &\gc Wall-clock time (s)	& \gc 152.31 & \gc 152.31 & \gc 152.31 & \gc 152.31 & \gc 152.31 & \gc 152.31 & \gc 152.31 \\
    \midrule
    \multirow{6}{*}{valt}
        &Found Best Score		& \textbf{0.7325} & 0.0000 & 0.0000 & 0.0000 & 0.0000 & 0.0000 & 0.0000 \\
        &\gc Oracle calls			& \gc 500 & \gc 500 & \gc 500 & \gc 500 & \gc 500 & \gc 500 & \gc 500 \\
        &Wall-clock time (s)	& 246.12 & 148.23 & 443.65 & 163.67 & 101.58 & 152.96 & 133.48 \\
    \cmidrule{2-9}
        &Found Best Score		& \textbf{0.1173} & 0.0000 & 0.0000 & 0.0000 & 0.0000 & 0.0000 & 0.0000 \\
        &Oracle calls			& 262 & 382 & 184 & 403 & 426 & 500 & 439\\
        &\gc Wall-clock time (s)	& \gc 101.58 & \gc 101.58 & \gc 101.58 & \gc 101.58 & \gc 101.58 & \gc 101.58 & \gc 101.58 \\

    \bottomrule
    \end{tabular}
    \end{adjustbox}
\end{table*}
\section{Details of Benchmark Tasks}
\label{supp_sec:benchmark details}
We demonstrate the effectiveness of InvBO through three optimization benchmarks: Guacamol~\cite{brown2019guacamol}, DRD3, and arithmetic expression fitting task~\cite{kusner2017grammar,tripp2020sample,maus2022local,lee2023advancing}.
Guacamol benchmark tasks aim to design molecules with the desired properties, which are scored with a range between 0 to 1.
The number of initial points is 100 and the number of oracle calls is 500 in a small-budget setting, and the number of initial points is 10,000 and the number of oracle calls is 70,000 in a large-budget setting.
In the DRD3 benchmark task, we aim to design ligands (molecules that specifically bind to a protein) that bind to dopamine receptor D3 (DRD3).
We evaluate the docking score of ligands with the Dockstring library~\cite{garcia2022dockstring}.
The number of the observed data points is 100 and the number of max oracle calls is 3,000.
Lastly, the arithmetic expression fitting task is to generate single-variable expressions that are close to a target expression~(\textit{e.g.}, $1/3+x+\sin\left(x \times x \right)$).
The number of initial access to oracle is 100 and the number of max oracle calls is 500.
All tasks are assumed to be noiseless settings, and we select the initial points randomly from the dataset provided in ~\cite{lee2023advancing} and the same initial data was used in all tasks.
\section{Details of Baselines}
\label{supp_sec:baseline details}
We compare our InvBO with six latent Bayesian Optimization methods: LS-BO,  TuRBO~\cite{eriksson2019scalable}, W-LBO~\cite{tripp2020sample}, LOL-BO~\cite{maus2022local}, CoBO~\cite{lee2023advancing}, and PG-LBO~\cite{chen2024pg}, and one graph-based genetic algorithm: GB-GA~\cite{jensen2019graph}.
We adopt the standard LBO for LS-BO approach following other work~\cite{lee2023advancing} with VAE update during the optimization process.
TuRBO leverages trust regions to alleviate the over-exploration.
To adapt TuRBO to the latent space, we employ TuRBO-$L$, LS-BO with trust region, following other latent Bayesian optimization baselines~\cite{maus2022local,lee2023advancing}.
W-LBO weights the data based on importance in order to focus on samples with high objective values.
LOL-BO adapts the concept of the trust region to the latent space and proposes VAE learning methods to inject the prior of the sparse GP models for better optimization. 
CoBO designs novel regularization losses based on the Lipschitz condition to boost the correlation between the distance in the latent space and the distance within the objective function. 
PG-LBO utilizes unobserved data with a pseudo-labeling technique and integrates Gaussian Process guidance into VAE training to learn a latent space for better optimization.
In PG-LBO, we sample the pseudo data by adding the Gaussian noise, and dynamic thresholding. 
\section{Implementation Details}
\label{supp_sec:Implementation_details}
Our implementation is based on the codebase of~\cite{maus2022local}.
We use PyTorch\footnote{Copyright (c) 2016-Facebook, Inc (Adam Paszke)~\cite{paszke2019pytorch}, Licensed under BSD-style license}, BoTorch\footnote{Copyright (c) Meta Platforms, Inc. and affiliates. Licensed under MIT License}~\cite{balandat2020botorch}, GPyTorch\footnote{Copyright (c) 2017 Jake Gardner. Licensed under MIT License}~\cite{gardner2018gpytorch}, and Guacamol\footnote{Copyright (c) 2020 The Apache Software Foundation, Licensed under the Apache License, Version 2.0.} software packages.
In the Guacamol tasks and DRD3 task~\cite{garcia2022dockstring}, we use the SELFIES VAE~\cite{maus2022local} which is pretrained in an unsupervised manner with 1.27M molecules from Guacamol benchmark.
The Grammar VAE~\cite{kusner2017grammar} is pre-trained with 40K expression data in an unsupervised manner.
The dimension of latent space is 256 in the SELFIES VAE and 25 in the Grammar VAE.
The size of the data obtained from the acquisition function and $k$ are hyperparameters, which are presented in Table~\ref{table:hyperparameters2}.

\subsection{Hyperparameters Setting}
The learning rate used in the inversion method is 0.1 in all tasks, as we empirically observe that it always finds the latent vector that generates the target discrete data.
The maximum iteration number of the inversion method is 1,000 in all tasks.
The rest of the hyperparameters follow the setting used in~\cite{lee2023advancing}.
Since arithmetic fitting tasks and Guacamol with the small budget tasks use different initial data numbers used in~\cite{lee2023advancing}, we set the number of top-$k$ data and the number of query points $N_q$ same as DRD3 task, as they use the same number of initial data points.

\begin{table*}[ht]
    \caption{Other hyperparameters for various benchmarks.}
    \label{table:hyperparameters2}
    \centering
    \vspace{6pt}
    \begin{adjustbox}{width=\textwidth}
    \begin{tabular}{c|c|c|c|c}
         Hyperparameters & Guacamol-small & Guacamol-large & DRD3 & Arithmetic \\
         \midrule
         Number of initial data points $|D^0|$ & 100 & 10000 & 100 & 100  \\
         Number of query points $N_q$  & 5 & 10 & 5 & 5 \\
         Number of top-$k$ data  & 50 & 1000 & 10 & 10 \\
         VAE update interval $N_{\text{fail}}$  & 10 & 10 & 10 & 10 \\
         Max oracle calls & 500 & 70000 & 3000 & 500 \\
    \end{tabular}
    \end{adjustbox}
\end{table*}

\section{Pseudocode of Potential-aware Trust Region Anchor Selection}
\label{supp_sec:pseudocode of PAS}
\begin{algorithm}
    \caption{Potential-aware Anchor Selection}
    \label{algo:Anchor_Selection}
    \textbf{Input:} Data history $\{\mathbf{x}^i, \mathbf{z}^i, y^i\}^n_{i=1}$, surrogate model $\mathcal{GP}$
\begin{algorithmic}[1]
\For{$i = 1,2,...,n$}
\State Get a candidate set $Z^i_{\text{cand}}$ with random points in the trust region $\mathcal{T}^i$ around $\mathbf{z}^i$
\State $\alpha^i_{\text{pot}} = \underset{\mathbf{z}\in Z^i_\text{cand}}{\max} \hat{f}(\mathbf{z}) \text{ where } \hat{f} \sim \mathcal{GP}\left(
\mu(\mathbf{z}), k(\mathbf{z}, \mathbf{z}')
\right)$ \hfill \emph{$\rhd$ Eq. \eqref{eq:acquisition in TR}}
\EndFor
\State $Y = \{y^i\}^n_{i=1}$
\State $A = \{\alpha^i_{\text{pot}} \}^n_{i=1}$
\For{$i = 1,2,...,n$}
\State $\alpha^i_{\text{scaled}} \leftarrow \text{Calculate}(\alpha^i_{\text{pot}}, Y, A)$ \hfill \emph{$\rhd$ Eq. \eqref{eq:Scaling}}
\State $s^i \leftarrow  y^i + \alpha^i_{\text{scaled}}$ \hfill \emph{$\rhd$ Eq. \eqref{eq:final_anchor_score}}
\EndFor
\State $S = \{ s^i\}^n_{i=1}$
\State \Return $\mathbf{z}^i$  where $i$ is index of $\max \left(S\right)$
\end{algorithmic}
\end{algorithm}

Here, we provide the pseudocode of the potential-aware trust region anchor selection method.
We get the max acquisition function value, which is a Thompson Sampling, of each trust region $\mathcal{T}^i$ given a candidate set $Z^i_{\text{cand}}$ in lines 2-3.
After scaling the max acquisition function values, we calculate the final score of each trust region $s^i$ in lines 8-9.
Finally, we sort the anchors in the descending order of the final score and select the anchor.

\section{Pseudocode of InvBO Applied to CoBO}
\label{supp_sec:BOEING Pseudocode}
\begin{algorithm}
    \caption{InvBO built on CoBO}
    \label{algo:InvBO}
    \textbf{Input:} Pretrained encoder $q_{\phi}$, decoder $p_{\theta}$, black-box function $f$, surrogate model $g$, acquisition function $\alpha$, oracle budget $T$, latent update interval $N_{\text{fail}}$, number of query point $N_q$, initial data $\mathcal{D}^0 = \left\{(\mathbf{x}^i, y^i)\right\}^n_{i=1}$, learning rate for inversion $\eta$, distance function $d_\mathcal{X}$
\begin{algorithmic}[1]
    \State $\mathcal{D}^0 \leftarrow \left\{
    \left(\mathbf{x}^i, \mathbf{z}^i_{\text{inv}}, y^i\right)| \left(\mathbf{x}^i, y^i \right) \in \mathcal{D}^0, \mathbf{z}^i_{\text{inv}} \leftarrow \text{Inversion} \left(\mathbf{x}^i, q_\phi, p_\theta, \eta, d_\mathcal{X} \right)
    \right\}^n_{i=1}$ \hfill \emph{$\rhd$ Algorithm~\ref{algo:inversion}}
    \State $n_{\text{fail}} \leftarrow 0$
    \For{$t = 1,2,...,T$}
    \State $\mathcal{D}^{t} \leftarrow \text{CONCAT}\left(\mathcal{D}^{t-1}[-N_q:], topk(\mathcal{D}^{t-1})\right)$
    \If{$n_{\text{fail}} \le N_{\text{fail}}$}
    \State $n_{\text{fail}} \leftarrow 0$
    \State Train $q_\phi$ and $p_\theta$ with $\mathcal{L}_{\text{CoBO}}, \mathcal{D}^{t}$
    \State $\mathcal{D}^{t} \leftarrow \left\{
    \left(\mathbf{x}^i, \mathbf{z}^i_{\text{inv}}, y^i\right)| \left(\mathbf{x}^i, \mathbf{z}^i, y^i\right) \in \mathcal{D}^{t}, \mathbf{z}^i_{\text{inv}} \leftarrow \text{Inversion}\left(\mathbf{x}^i, q_\phi, p_\theta, \eta, d_\mathcal{X} \right)
    \right\}^{|\mathcal{D}^{t}|}_{i=1}$ \hfill \emph{$\rhd$ Algorithm~\ref{algo:inversion}}
    \EndIf
    \State Train $g$ with $\mathcal{L}_{\text{surr}}, \mathcal{D}^{t}$ if $t \ne 1$ else $\mathcal{D}^0$
    \State $Z_{\text{anchor}} \leftarrow \text{Trust Region Anchor Selection}(\mathcal{D}^{t}, g)$ \hfill \emph{$\rhd$ Algorithm~\ref{algo:Anchor_Selection}}
    \State $Z_{\text{next}} \leftarrow$ Select $N_q$ query points in trust region centered on $Z_{\text{anchor}}$ 
    \State $y^* \leftarrow \max_{(\mathbf{x}, \mathbf{z}, y) \in \mathcal{D}^{t}} y$
    \If{$f(p_\theta(\mathbf{z}^i)) \le y^*, \ \ \forall \mathbf{z}^i \in Z_{\text{next}}$}
    \State $n_{\text{fail}} \leftarrow n_{\text{fail}} + 1$ 
    \EndIf
    \State $\mathcal{D}_{\text{new}} \leftarrow \left\{
    \left(p_\theta(\mathbf{z}^i), \mathbf{z}^i, f(p_\theta(\mathbf{z}^i))\right)
    | \mathbf{z}^i \in Z_{\text{next}}\right\}^{|Z_{\text{next}}|}_{i=1}$
    \State $\mathcal{D}^{t} \leftarrow \text{CONCAT}\left(\mathcal{D}^{t}, \mathcal{D}_{\text{new}} \right)$
    \EndFor
    \State $\left(\mathbf{x}^*, \mathbf{z}^*, y^*\right) \leftarrow \arg\max_{(\mathbf{x}, \mathbf{z}, y) \in \mathcal{D}^{T}} y$
    \State \Return $\mathbf{x}^*$
\end{algorithmic}
\end{algorithm}
We provide the pseudocode of CoBO applying our method, InvBO, which consists of the inversion method and the potential-aware anchor selection.
The inversion method is used in lines 1 and 8, and the potential-aware anchor selection method is used in line 11.
As in CoBO, we use the subset of data history, which is composed of the top $k$ scored data and the most recently evaluated data during the optimization process in line 4.
When we fail $N_{\text{fail}}$ times to update the best score, we train the VAE with CoBO loss $\mathcal{L}_{\text{CoBO}}$ and update the dataset to be aligned using the inversion method.
After that, we train the surrogate model with $\mathcal{D}^t$ and negative log-likelihood loss $\mathcal{L}_{\text{surr}}$ except for the first iteration and select the trust region anchor with the proposed potential-aware trust region anchor selection in line 11.
We select $N_q$ next query points from the trust region in line 12.
Then, we evaluate the next query points with the objective function and update the data history in lines 17, and 18. 


\end{document}